\newtheorem{prop}{Proposition}
\crefname{section}{\S}{\S\S}
\Crefname{section}{\S}{\S\S}
\crefname{table}{Tab.}{}
\crefname{figure}{Fig.}{}
\crefname{algorithm}{Algorithm}{}
\crefname{equation}{eq.}{}
\crefname{appendix}{App.}{}
\crefname{prop}{Proposition}{}
\newcommand*\iftodonotes{\if@todonotes@disabled\expandafter\@secondoftwo\else\expandafter\@firstoftwo\fi}  % defines \iftodonotes{<true>}{<false>}, thanks to https://tex.stackexchange.com/questions/126559/conditional-based-on-packageoption
\newcommand{\note}[4][]{\todo[author=#2,color=#3,size=\scriptsize,fancyline,caption={},#1]{#4}} % default note settings, used by macros below.
\newcommand{\rowan}[2][]{\note[#1]{rowan}{green!40}{#2}}
\newcommand{\simone}[2][]{\note[#1]{simone}{blue!40}{#2}}
\newcommand{\defn}[1]{\textbf{#1}}
\newcommand{\vv}{\mathbf{v}}
\newcommand{\word}[1]{\textit{#1}}
\definecolor{LightGrey}{gray}{0.9}
\newcommand{\citeposs}[1]{\citeauthor{#1}'s (\citeyear{#1})}
\newcolumntype{R}[2]{%
    >{\adjustbox{angle=#1,lap=\width-(#2)}\bgroup}%
    l%
    <{\egroup}%
}
\title{\raisebox{1ex}[0in][0in]{\parbox[b]{\linewidth}{\begin{flushright}\footnotesize
        \textmd{\textsf{\textcolor{gray}{Appeared in the
                proceedings of EMNLP--IJCNLP 2019 (Hong Kong, November). \\ This
          clarified version was prepared in December 2019.}}}\end{flushright}}}\\ \vspace{-1ex}It's All in the Name: Mitigating Gender Bias with Name-Based Counterfactual Data Substitution}
\author{Rowan Hall Maudslay$^1$ \hspace{1.5em} Hila Gonen$^2$ \hspace{1.5em} Ryan Cotterell$^1$ \hspace{1.5em} Simone Teufel$^1$ \\
  ${}^1$ Department of Computer Science and Technology, University of Cambridge \\ ${}^2$ Department of Computer Science, Bar-Ilan University \\
  {\tt \{rh635,rdc42,sht25\}@cam.ac.uk} \; {\tt hilagnn@gmail.com} \\}
\date{}
\begin{document}
\maketitle
\begin{abstract}
This paper treats gender bias latent in word embeddings. Previous mitigation attempts rely on the operationalisation of gender bias as a projection over a linear subspace. An alternative approach is \emph{Counterfactual Data Augmentation} (CDA), in which a corpus is duplicated and augmented to remove bias, e.g. by swapping all inherently-gendered words in the copy. We perform an empirical comparison of these approaches on the English Gigaword and Wikipedia, and find that whilst both successfully reduce direct bias and perform well in tasks which quantify embedding quality, CDA variants outperform projection-based methods at the task of drawing non-biased gender analogies by an average of 19\% across both corpora.
We propose two improvements to CDA: \emph{Counterfactual Data Substitution} (CDS), a variant of CDA in which potentially biased text is randomly substituted to avoid duplication, and the Names Intervention, a novel name-pairing technique that vastly increases the number of words being treated. CDA/S with the Names Intervention is the only approach which is able to mitigate indirect gender bias: following debiasing, previously biased words are significantly less clustered according to gender (cluster purity is reduced by 49\%), thus improving on the state-of-the-art for bias mitigation.
\simone{I commented the last sentence out. Abstract is very long.}\rowan{there was some one-line weirdness going on later so I added a part of it back in, that alright?}
\end{abstract}
 
\section{Introduction}

\emph{Gender bias} describes an inherent prejudice against a gender, captured both by individuals and larger social systems. 
Word embeddings, a popular machine-learnt semantic space, have been shown to %capture 
retain gender bias present in corpora used to train them \citep{caliskan}.
This results in gender-stereotypical vector analogies {\`a} la \newcite{NIPS2013_5021}, such as \word{man}:\word{computer~programmer} :: \word{woman}:\word{homemaker} \citep{DBLP:conf/nips/BolukbasiCZSK16}, and such bias has been shown to materialise in a variety of downstream tasks, e.g. coreference resolution \citep{CorefRes, CorefRes2}.

By operationalising gender bias in word embeddings as a linear subspace, \newcite{DBLP:conf/nips/BolukbasiCZSK16} are able to debias with simple techniques from linear algebra. Their method successfully mitigates
\simone{does not particularly like boldfacing for emphasis, but can live with.}\defn{direct bias}: \word{man} is no longer more similar to \word{computer~programmer} in vector space than \word{woman}. However, the structure of gender bias in vector space remains largely intact, and the new vectors still evince \defn{indirect bias}: associations which result from gender bias between not explicitly gendered words, for example a possible association between \word{football} and \word{business} resulting from their mutual association with explicitly masculine words \citep{hila}. In this paper we continue the work of \citeauthor{hila}, and
show that another paradigm for gender bias mitigation proposed by \citet{lu}, Counterfactual Data Augmentation (CDA), is also unable to mitigate indirect bias. We also show, using a new test we describe (non-biased gender analogies), that WED might be removing too much gender information, casting further doubt on its operationalisation of gender bias as a linear subspace.  

%associations between words which are themselves biased, but not explicitly gendered, such as a bias between \word{football} and \word{business} which might result from a mutual association with explicitly masculine words \citep{hila}. 
%This calls into question whether it is appropriate to operationalise gender as a linear subspace at all, perhaps suggesting that bias may be a more complex, non-linear phenomenon. 

To improve CDA we make two proposals. The first, Counterfactual Data Substitution (CDS), is designed to avoid text duplication in favour of substitution. The second, the Names Intervention, is a method which can be applied to either CDA or CDS, and treats bias inherent in first names. It does so using a novel name pairing strategy that accounts for both name frequency and gender-specificity. Using our improvements, the clusters of the most biased words exhibit a reduction of cluster purity by an average of 49\% across both corpora following treatment, thereby offering a partial solution to the problem of indirect bias as formalised by \citet{hila}. \simone{first part of reaction to reviewer 4}Additionally, although one could expect that the debiased embeddings might suffer performance losses in computational linguistic tasks, our embeddings remain useful for at least two such tasks, word similarity and sentiment classification \cite{doc2vec}. \looseness -1

%Our analysis of CDA indicates that the primary reason indirect bias persists in the vectors stems from gender information inherent in people's names. Although most names can be used by both genders to a certain degree, the majority are heavily skewed towards one gender. Thus, as with all gendered language, we can assume that there are gender differences in the distributions of names' word coocurrences. Since word embeddings are trained from word cooccurence matrices, even if removed from an embedding's output vocabulary, names will act as phantom carriers of gender information, and thus propogate indirect gender bias. We find that, by swapping names with corresponding names of the opposite gender in CDA, biased words are significantly less clustered following bias mitigation, thereby offering a partial solution to the problem of indirect bias as formalised by \citet{hila}. Furthermore, other than being less gender biased, the mitigated space remains useful for traditional computational linguistic tasks such as word similarity and sentiment classification \cite{doc2vec}.

\section{Related Work}

The measurement and mitigation of gender bias relies on the chosen operationalisation of gender bias. 
As a direct consequence, how researchers 
%\simone{Typographic emphasis on "choose" or not? -- I think by explicitly stating the verb "choose", there is already enough emphasis on the interesting point.}\rowan{agree - removed}
choose to operationalise bias determines both the techniques at one's disposal to mitigate the bias, as well as the yardstick by which success is determined. 

\subsection{Word Embedding Debiasing} \label{sec:gender-direction}

One popular method for the mitigation of gender bias, introduced by \newcite{DBLP:conf/nips/BolukbasiCZSK16}, measures the genderedness of words by the extent to which they point in a 
%\simone{I moved the emphasis to where this term is first introduced.}
\defn{gender direction}. Suppose we embed our words into $\mathbb{R}^d$. The fundamental assumption is that there exists a linear subspace $B \subset \mathbb{R}^d$ that contains (most of) the gender bias in the space of word embeddings. (Note that $B$ is a direction
when it is a single vector.) We term this assumption the \defn{gender subspace hypothesis}.  Thus, by 
basic linear algebra, we may decompose any word vector $\vv \in \mathbb{R}^d$ as
the sum of the projections onto the bias subspace and its complement: $\vv = \vv_{B} + \vv_{\perp B}$. The (implicit) operationalisation of gender bias under this hypothesis is, then, the magnitiude of the bias vector $||\vv_{B}||_2$.

To capture $B$, \citet{DBLP:conf/nips/BolukbasiCZSK16} 
first construct $N$ sets $D_i$, each of which
contains a pair of words that differ in their gender but that are otherwise semantically equivalent (using a predefined set of gender-definitional pairs). For example,
\{\word{man}, \word{woman}\} would be one
set and \{\word{husband}, \word{wife}\} would be another. They then compute the average empirical covariance matrix
\begin{equation}
    C = \sum_{i=1}^N \frac{1}{|D_i|}\sum_{w \in D_i} (\vec{w} - \mu_i)(\vec{w} - \mu_i)^{\top}
\end{equation}
where $\mu_i$ is the mean embedding of the words in $D_i$, then $B$ is taken to be the space spanned by the top $k$ eigenvectors of $C$ associated with the largest eigenvalues. \citeauthor{DBLP:conf/nips/BolukbasiCZSK16} set $k=1$, and thus define a gender direction. 

%Given a word embedding $\vec{\textit{nurse}}$, we can compute a bias mitigated form $\vec{\textit{nurse}'}$ as the orthogonal projection onto $\perp\!\!B$. That is, for every word embedding $\vv \in \mathbb{R}^d$, we re-embed $\vv$ as $\vv_{\perp B}$. 
Using this operalisation of gender bias, \citeauthor{DBLP:conf/nips/BolukbasiCZSK16} go on to provide a linear-algebraic method (Word Embedding Debiasing, WED, originally ``hard debiasing'') to remove gender bias in two phases: first, for non-gendered words, the gender direction is removed (``neutralised''). Second, pairs of gendered words such as \word{mother} and \word{father} are made equidistant to all non-gendered words (``equalised''). Crucially, under the gender subspace hypothesis, it is only necessary to
identify the subspace $B$ as it is possible to perfectly remove the bias
under this operationalisation using tools from numerical linear algebra.

%For a pair of words like \word{he}, \word{she} described as vectors in an embedding, the direction of $\vec{he}-\vec{she}$ is assumed to capture information about their relationship. Having defined a one-dimensional \emph{gender direction} in the word embedding vector space by combining the differences of multiple \emph{definitional pairs} (to account for polysemy and randomness), \citeauthor{bolukbasi}\ then perform a two-step linear algebraic correction. The first step, \emph{neutralise}, moves \emph{gender-neutral words}, such as `nurse', to be equally male and female in the gender direction. 

%The second step, \emph{equalise}, makes both words within a set of \emph{equalise pairs}, like \word{mother} and \word{father}, equidistant from all gender-neutral words. Each pair does not itself have to be the same distance from a gender-neutral word as another pair; \word{nurse}, for instance, could be a different distance from \{\word{father}, \word{mother}\} and \{\word{male}, \word{female}\}.

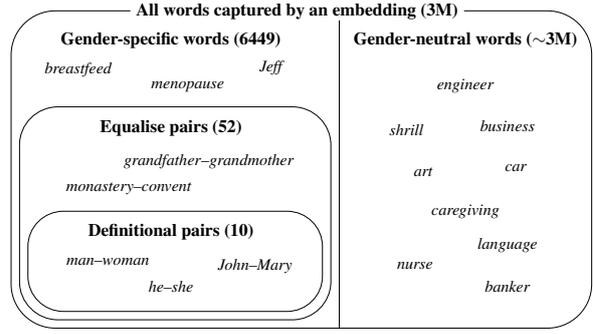
\begin{figure}
\centering
\resizebox{\columnwidth}{!}{%
\begin{tikzpicture}

\draw[fill=blue!0!white, draw=black, rounded corners=30pt] (0.2,0.2) rectangle (14,7.8);
\draw[draw=black, rounded corners=26pt] (0.4,0.4) rectangle (7.8,5.5);
\draw[draw=black, rounded corners=24pt] (0.6,0.6) rectangle (7.6,3);

\large{
\node[fill=blue!0!white, , rounded corners=5pt] at (7,7.8) {\bf All words captured by an embedding (3M)};
}
\draw[black] (8,0.2) -- (8,7.57);

\large{
\node at (4,7.1) {\bf Gender-specific words (6449)};
\node at (11,7.1) {\bf Gender-neutral words ($\mathbf{\sim}$3M)};
\node at (4,5) {\bf Equalise pairs (52)};
\node at (4,2.5) {\bf Definitional pairs (10)};
}

\normalsize{
\node at (1.8,6.39) {\word{breastfeed}};
\node at (6.4,6.45) {\word{Jeff}};
\node at (4.4,6) {\word{menopause}};

\node at (3,3.55) {\word{monastery}--\word{convent}};
\node at (4.9,4.2) {\word{grandfather}--\word{grandmother}};

\node at (2.5,1.8) {\word{man}--\word{woman}};
\node at (4,1.2) {\word{he}--\word{she}};
\node at (6,1.7) {\word{John}--\word{Mary}};

\node at (9.8,1.7) {\word{nurse}};
\node at (12,1.2) {\word{banker}};
\node at (12,2.2) {\word{language}};
\node at (12,5.05) {\word{business}};
\node at (9.6,4.95) {\word{shrill}};
\node at (10,3.95) {\word{art}};
\node at (11,3) {\word{caregiving}};
\node at (12.2,4.05) {\word{car}};
\node at (11,6) {\word{engineer}};

}
\end{tikzpicture}
}
\caption{Word sets used by WED with examples}
\label{fig:word_sets}
\end{figure}

The method uses three sets of words or word pairs: 10 definitional pairs (used to define the gender direction), 218 gender-specific seed words (expanded to a larger set using a linear classifier, the compliment of which is neutralised in the first step), and 52 equalise pairs (equalised in the second step). The relationships among these sets are illustrated in Figure~\ref{fig:word_sets}; for instance, gender-neutral words are defined as all words in an embedding that are not gender-specific.   
%The \emph{equalise pairs} (balanced in the \emph{equalise} step) and the \emph{definitional pairs} (which define the \emph{gender direction}) are both sets of pairs of words, although these are all included in the larger \emph{gender-specific words} set.

\citeauthor{DBLP:conf/nips/BolukbasiCZSK16} find that this method results in a 68\% reduction of stereotypical analogies as identified by human judges. However, bias is removed only insofar as the operationalisation allows. In a comprehensive analysis, \newcite{hila} show that the original structure of bias in the WED embedding space remains intact. 

\subsection{Counterfactual Data Augmentation}  \label{cda_initial}

As an alternative to WED, \citet{lu} propose Counterfactual Data Augmentation (CDA), in which a text transformation designed to invert bias is performed on a text corpus, the result of which is then appended to the original, to form a new bias-mitigated corpus used for training embeddings. 
%(Note that in contrast to WED, where gender mitigation takes place after embedding space creation, CDA's mitigation happens before it.)
Several interventions are proposed: in the simplest, occurrences of words in 124 gendered word pairs are swapped. For example, `the woman cleaned the kitchen' would (counterfactually) become `the man cleaned the kitchen' as \word{man}--\word{woman} is on the list. Both versions would then together be used in embedding training, in effect neutralising the \word{man}--\word{woman} bias.

The \emph{grammar intervention}, \citeauthor{lu}'s improved intervention, uses coreference information to veto swapping gender words when they corefer to a proper noun.\footnote{We interpret \citeposs{lu} phrase ``cluster'' to mean ``coreference chain''.} This avoids \word{Elizabeth \dots she \dots queen} being changed to, for instance, \word{Elizabeth \dots he \dots king}. It also uses POS information to avoid ungrammaticality related to the ambiguity of \word{her} between personal pronoun and possessive determiner. In the context, `her teacher was proud of her', this results in the correct sentence `\emph{his} teacher was proud of \emph{him}'. 

\section{Improvements to CDA}

We prefer the philosophy of CDA over WED as it makes fewer assumptions about the operationalisation of the bias it is meant to mitigate.

\subsection{Counterfactual Data Substitution}

The duplication of text which lies at the heart of CDA will produce debiased corpora with peculiar statistical properties unlike those of naturally occurring text. Almost all observed word frequencies will be even, with a notable jump from 2 directly to 0, and a  type--token ratio far lower than predicted by Heaps' Law for text of this length. The precise effect this will have on the resulting embedding space is hard to predict, %(particularly as word2vec and similar algorithms ignore low-frequency tokens), 
but we assume that it is preferable not to violate the fundamental assumptions of the algorithms used to create embeddings. %\simone{Some further reformulation here; sounded better to me; please re-check.}  - RHM: looks good!

As such, we propose to apply substitutions probabilistically (with 0.5 probability), which results in a non-duplicated counterfactual training corpus, a method we call \textbf{Counterfactual Data Substitution (CDS)}. Substitutions are performed on a per-document basis in order to maintain grammaticality and discourse coherence. 
This simple change should have advantages in terms of naturalness of text and processing efficiency, as well as theoretical foundation. 

\subsection{The Names Intervention} \label{sec:names}

Our main technical contribution in this paper is to provide a method for better counterfactual augmentation, which is based on bipartite-graph matching of names. Instead of Lu et. al's (2018) \nocite{lu} solution of not treating words which corefer to proper nouns in order to maintain grammaticality, we propose an explicit treatment of first names. This is because we note that as a result of not swapping the gender of words which corefer with proper nouns, CDA could in fact reinforce certain biases instead of mitigate them. Consider the sentence `Tom \dots He is a successful and powerful executive'. Since \word{he} and  \word{Tom} corefer, the counterfactual corpus copy will not replace \word{he} with \word{she} in this instance, and as the method involves a duplication of text, this would result in a stronger, not weaker, association between \word{he} and gender-stereotypic concepts present like \word{executive}. Even under CDS, this would still mean that biased associations are left untreated (albeit at least not reinforced). Treating names should in contrast effect a real neutralisation of bias, with the added bonus that grammaticality is maintained without the need for coreference resolution.
 
 \begin{figure}
\center{\includegraphics[width=\linewidth,trim={0 0 1.1cm 1.3cm},clip]
        {./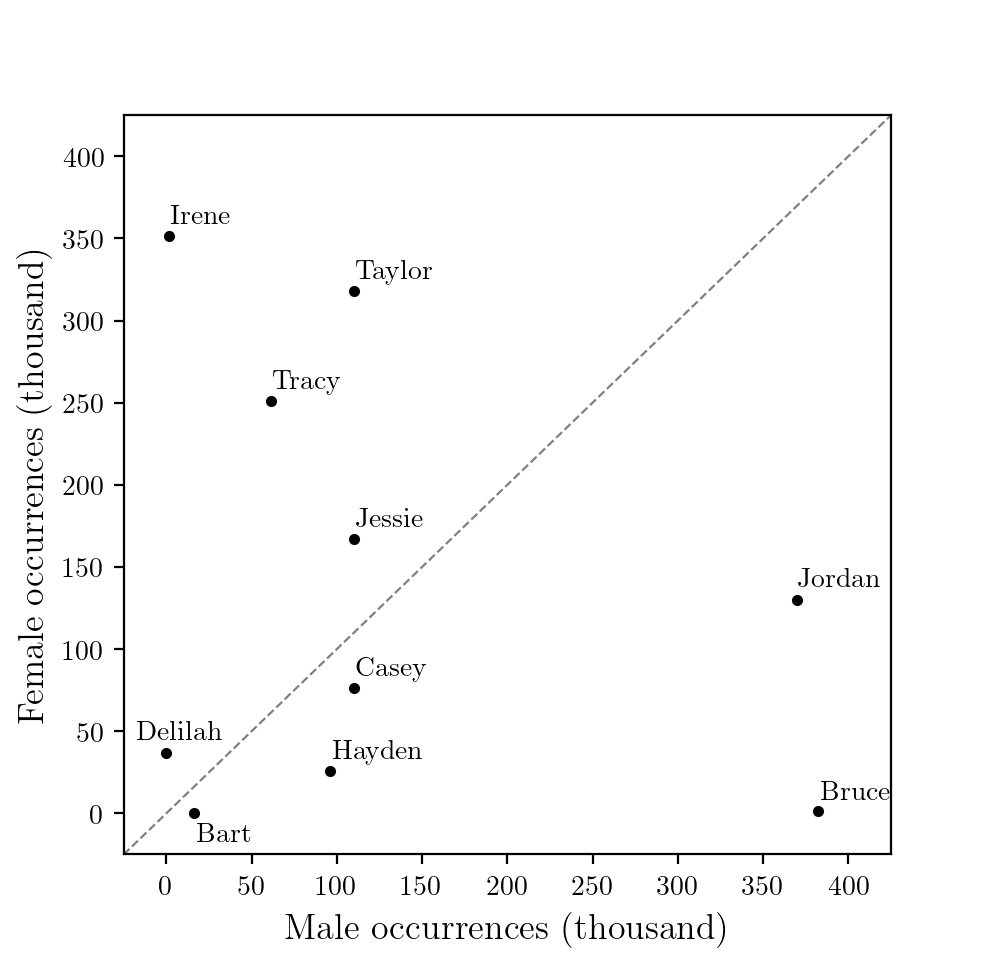}}
\caption{Frequency and gender-specificity of names in the SSA dataset}
 \label{fig:names_1}
\end{figure}

The United States Social Security Administration (SSA) 
dataset contains a list of all first names from Social Security card applications for births in the United States after 1879, along with their gender.\footnote{\url{https://www.ssa.gov/oact/babynames/background.html}}
Figure~\ref{fig:names_1} plots a few example names according to their male and female occurrences, and shows that names have varying degrees of gender-specificity.\footnote{The dotted line represents gender-neutrality, and more frequent names are located further from the origin.}

We fixedly associate pairs of names for swapping, thus expanding \citeauthor{lu}'s short list of gender pairs vastly. Clearly both name frequency and the degree of gender-specificity are relevant to this bipartite matching. If only frequency were considered, a more gender-neutral name (e.g. \word{Taylor}) could be paired with a very gender-specific name (e.g. \word{John}), which would negate the gender intervention in many cases (namely whenever a male occurrence of \word{Taylor} is transformed into \word{John}, which would also result in incorrect pronouns, if present). If, on the other hand, only the degree of gender-specificity were considered, we would see frequent names (like \word{James}) being paired with far less frequent names (like \word{Sybil}), which would distort the overall frequency distribution of names. This might also result in the retention of a gender signal: for instance, swapping a highly frequent male name with a rare female name might simply make the rare female name behave as a new link between masculine contexts (instead of the original male name), as it rarely appears in female contexts.

%Furthermore, an infrequent name may not occur enough to properly neutralise the bias of a frequent name (the rare name might simply become another link between the same gendered contexts), whilst a frequent name might tip the bias of an infrequent name too far in the other direction.

%Furthermore, pairing frequent and infrequent names together might keep the gender signal, e.g., changing a highly frequent male name to a rare female name might simply make this rare female name a link between masculine contexts (instead of the original male name), as it rarely appears in female contexts.
 
%which would distort the overall frequency distribution of names in a bias mitigated corpus, and mean that they would retain their more of their original gender signal. For instance, if we swapped \word{James} with \word{Sybil}, then \word{Sybil} will usually be with a masculine context, instead of the original, and \word{James} will not have been properly neutralised with a sufficient female context.

Figure~\ref{fig:names_2} shows a plot of various names' number of primary gender\footnote{Defined as its most frequently occurring gender.} occurances against their secondary gender occurrences, with red dots for primary-male and blue crosses for primary-female names.\footnote{The hatched area demarcates an area of the graph where no names can exist: if any name did then its primary and secondary gender would be reversed and it would belong to the alternate set.} 
The problem of finding name-pairs thus decomposes into a Euclidean-distance bipartite matching problem, which can be solved using the Hungarian method \citep{kuhn}. We compute pairs for the most frequent 2500 names of each gender in the SSA dataset. There is also the problem that many names are also common nouns (e.g. \word{Amber}, \word{Rose}, or \word{Mark}), which we solve using Named Entity Recognition. 

\begin{figure}
\center{\includegraphics[width=\linewidth,trim={0 0 1.1cm 1.3cm},clip]
        {./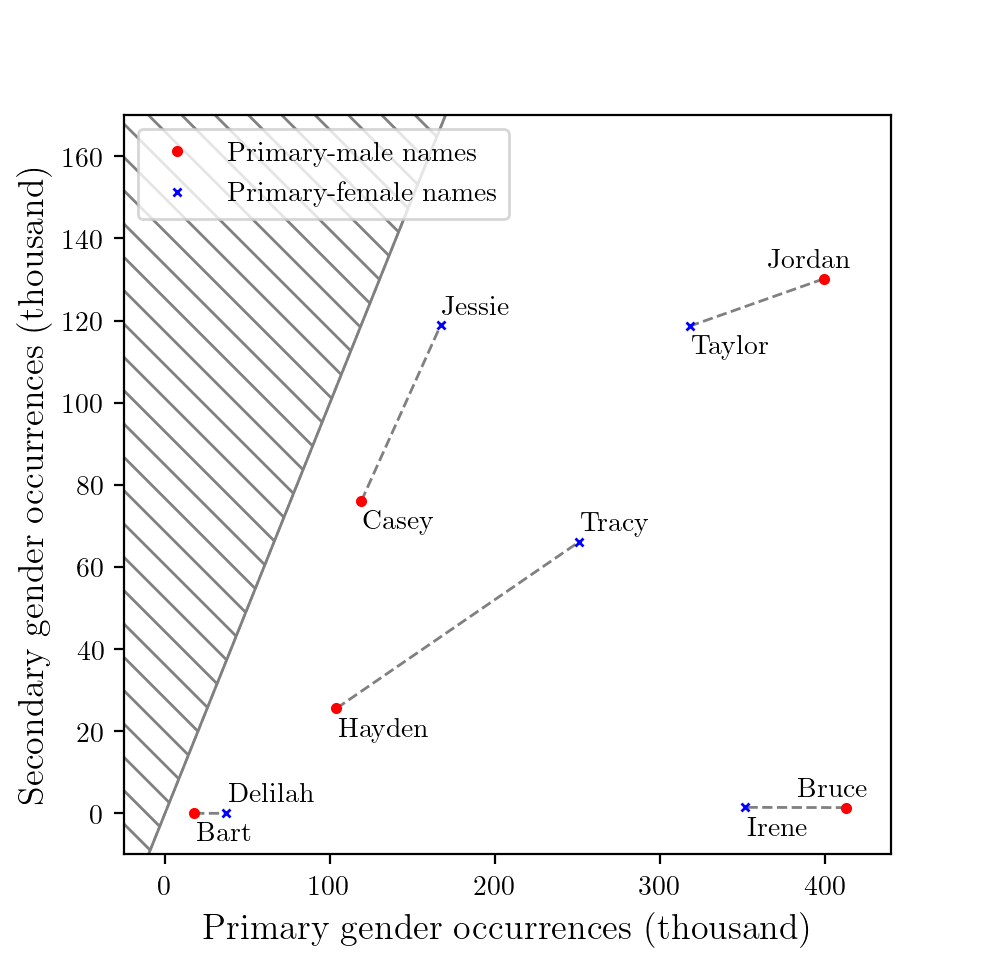}}
\caption{Bipartite matching of names by frequency and gender-specificity}
 \label{fig:names_2}
\end{figure}

\section{Experimental Setup}

We compare eight variations of the mitigation methods. {\bf CDA} is our reimplementation of \citeposs{lu} na\"{i}ve intervention, \textbf{gCDA} uses their grammar intervention, and \textbf{nCDA} uses our new Names Intervention. {\bf gCDS} and \textbf{nCDS} are variants of the grammar and Names Intervention using CDS. {\bf WED40} is our reimplementation of \citeposs{DBLP:conf/nips/BolukbasiCZSK16} method, which (like the original) uses a single component to define the gender subspace, accounting for $>40\%$ of variance. As this is much lower than in the original paper (where it was 60\%, reproduced in Figure~\ref{fig:pca_1}), we define a second space, {\bf WED70}, which uses a 2D subspace accounting for $>70\%$ of variance. To test whether WED profits from additional names, we use the 5000 paired names in the names gazetteer as additional equalise pairs (\textbf{nWED70}).\footnote{We use the 70\% variant as preliminary experimentation showed that it was superior to WED40.} As control, we also evaluate the unmitigated space ({\bf none}).\looseness=-1

We perform an empirical comparison of these bias mitigation techniques on two corpora, the Annotated English Gigaword \citep{gigaword} and Wikipedia. Wikipedia is of particular interest, since though its Neutral Point of View (NPOV) policy\footnote{\url{https://en.wikipedia.org/wiki/Wikipedia:Neutral_point_of_view}} predicates that all content should be presented without bias, women are nonetheless less likely to be deemed ``notable'' than men of equal stature \citep{wikibrit}, and there are differences in the choice of language used to describe them \citep{bamman, firstwomen}. We use the annotation native to the Annotated English Gigaword, and process Wikipedia with CoreNLP (statistical coreference; bidirectional tagger). Embeddings are created using Word2Vec\footnote{A CBOW model was trained over five epochs to produce 300 dimensional embeddings. Words were lowercased, punctuation other than underscores and hyphens removed, and tokens with fewer than ten occurrences were discarded.}. We use the original complex lexical input (gender-word pairs and the like) for each algorithm as we assume that this benefits each algorithm most. \simone{I am not 100\% sure of which "expansion" you are talking about here. The classifier Bolucbasi use maybe?}\rowan{yup - clarified} Expanding the set of gender-specific words for WED (following \citeauthor{DBLP:conf/nips/BolukbasiCZSK16}, using a linear classifier) on Gigaword resulted in 2141 such words, 7146 for Wikipedia.\footnote{We modify or remove some phrases from the training data not included in the vocabulary of our embeddings.}

\begin{figure}
\center{\includegraphics[width=\linewidth,trim=1.7cm 0cm 1.9cm 1.3cm,clip]
        {./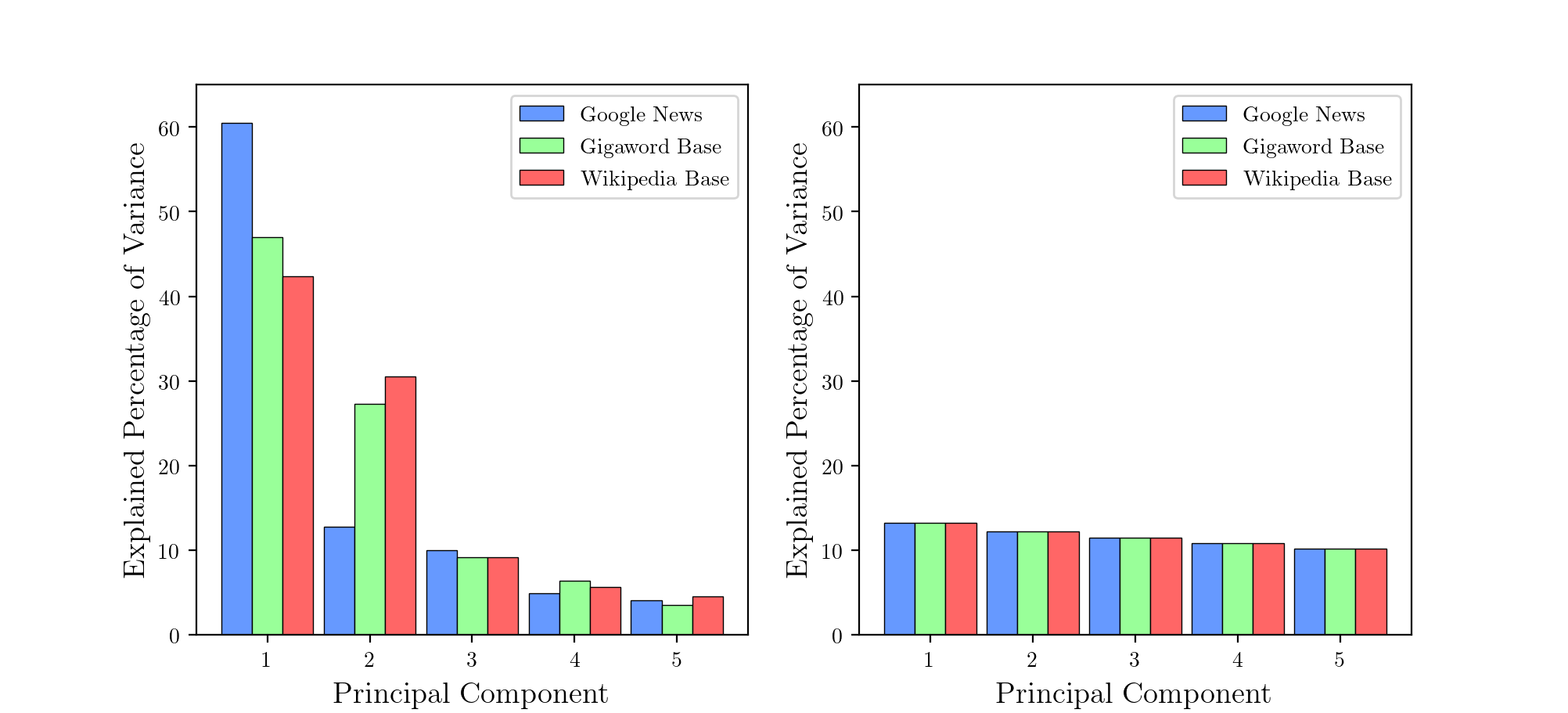}}
\caption{Variance explained by the top Principal Components of the definitional word pairs (left) and random unit vectors (right)}
\label{fig:pca_1}
\end{figure}

In our experiments, we test the degree to which the spaces are successful at mitigating direct and indirect bias, as well as the degree to which they can still be used in two NLP tasks standardly performed with embeddings, word similarity and sentiment classification. We also introduce one further, novel task, which is designed to quantify how well the embedding spaces capture an understanding of gender using non-biased analogies. Our evaluation matrix and methodology is expanded below. \looseness=-1

\paragraph{Direct bias} \citet{caliskan} introduce the Word Embedding Association Test (WEAT), which provides results analogous to earlier psychological work by \citet{greenwald} by measuring the difference in relative similarity between two sets of target words $X$ and $Y$ and two sets of attribute words $A$ and $B$. We compute Cohen's $d$ (a measure of the difference in relative similarity of the word sets within each embedding; higher is more biased), and a one-sided $p$-value which indicates whether the bias detected by WEAT within each embedding is significant (the best outcome being that no such bias is detectable). We do this for three tests proposed by \citet{nosek} which measure the strength of various gender stereotypes: art--maths, arts--sciences, and careers--family.\footnote{In the careers--family test the gender dimension is expressed by female and male first names, unlike in the other sets, where pronouns and typical gendered words are used.} 

\paragraph{Indirect bias} To demonstrate indirect gender bias we adapt a pair of methods proposed by \citet{hila}. First, we test whether the most-biased words prior to bias mitigation remain clustered following bias mitigation. To do this, we define a new subspace, $\vec{b}_\text{test}$, using the 23 word pairs used in the Google Analogy family test subset \citep{NIPS2013_5021} following \citeposs{DBLP:conf/nips/BolukbasiCZSK16} method, and determine the 1000 most biased words in each corpus
(the 500 words most similar to $\vec{b}_\text{test}$ and  $-\vec{b}_\text{test}$) in the unmitigated embedding. For each debiased embedding we then project these words into 2D space with tSNE \citep{tSNE}, compute clusters with \emph{k}-means, and calculate the clusters' V-measure \cite{vmeasure}. Low values of cluster purity indicate that biased words are less clustered following bias mitigation. 

Second, we test whether a classifier can be trained to reclassify the gender of debiased words. If it succeeds, this would indicate that bias-information still remains in the embedding. We trained an RBF-kernel SVM classifier on a random sample of 1000 out of the 5000 most biased words from each corpus using $\vec{b}_\text{test}$ (500 from each gender), then report the classifier's accuracy when reclassifying the remaining 4000 words. %\simone{Hila, Ryan: some changes here, pls. check}

\paragraph{Word similarity} The quality of a space is traditionally measured by how well it replicates human judgements of word similarity. 
The SimLex-999 dataset \citep{simlex} provides a ground-truth measure of similarity produced by 500 native English speakers.\footnote{It explicitly quantifies similarity rather than association or relatedness; pairs of entities like \word{coffee} and \word{cup} have a low rating.} Similarity scores in an embedding are computed as the cosine angle between word-vector pairs, and  Spearman correlation between  embedding and human judgements are reported. We measure correlative significance at $\alpha = 0.01$.%\looseness=-1

\paragraph{Sentiment classification} 
Following \citet{doc2vec}, 
we use a standard sentiment classification task to quantify the downstream performance of the embedding spaces when they are used as a pretrained word embedding input \citep{w2vd2v} to Doc2Vec on the Stanford Large Movie Review dataset. 
The classification is performed by an SVM classifier using the document embeddings as features, trained on 40,000 labelled reviews and tested on the remaining 10,000 documents, reported as error percentage. %\looseness -1

\paragraph{Non-biased gender analogies} 
When prop\-os\-ing WED,
\citet{DBLP:conf/nips/BolukbasiCZSK16} use human raters to class gender-analogies as either \emph{biased} (\word{woman}:\word{housewife} :: \word{man}:\word{shopkeeper}) or \emph{appro\-priate} (\word{woman}:\word{grandmother} :: \word{man}::\word{grandfather}), 
and postulate that whilst biased analogies are un\-desirable, appropriate ones should remain.
Our new analogy test uses the 506 analogies in the \emph{fam\-ily analogy} subset of the Google Analogy Test set \citep{NIPS2013_5021} to define many such appropriate analogies that should hold even in a debiased environment, such as \word{boy}:\word{girl} :: \word{nephew}:\word{niece}.\footnote{The entire Google Analogy Test set contains 19,544 analogies, which are usually reported as a single result or as a pair of semantic and syntactic results.}
We use a proportional pair-based analogy test, which measures each embedding's performance when drawing a fourth word to complete each analogy, and report error percentage. \looseness -1

\section{Results}

\begin{table}
\setlength\tabcolsep{1.5pt}
\centering
\resizebox{\columnwidth}{!}{
{\small
\begin{tabular}[b]{lcccccc}
\toprule
&\multicolumn{2}{c}{Art--Maths}
&\multicolumn{2}{c}{Arts--Sciences}
&\multicolumn{2}{c}{Career--Family} \\
Method & $d$ & $p$ & $d$ & $p$ & $d$ & $p$  \\
\midrule
& \multicolumn{6}{c}{\bf Gigaword}\\
\rowcolor{LightGrey} none & $1.32$ & $<10^{-2}$ & $1.50$ & $<10^{-3}$ & $1.74$& $<10^{-4}$ \\
CDA & $0.67$ & $.10$ & $1.05$ & $.02$ & $1.79$ & $<10^{-4}$ \\
gCDA & $1.16$ & $.01$ & $1.46$ & $<10^{-2}$ & $1.77$ & $<10^{-4}$ \\
nCDA & $-0.49$\phantom{$-$} & $.83$ & $0.34$ & $.27$ & $1.45$ & $<10^{-3}$ \\
\rowcolor{LightGrey} gCDS & $0.96$ & $.03$ & $1.31$ & $<10^{-2}$ & $1.78$ & $<10^{-4}$ \\
\rowcolor{LightGrey} nCDS & $-0.19$\phantom{$-$} & $.63$ & $0.48$ & $.19$ & $1.45$ & $<10^{-3}$ \\
WED40 & $-0.73$\phantom{$-$} & $.92$ & $0.31$ & $.28$ & $1.24$ & $<10^{-2}$ \\
WED70 & $-0.73$\phantom{$-$} & $.92$ & $0.30$ & $.29$ & $1.15$ & $<10^{-2}$ \\
nWED70 & $0.30$ & $.47$ & $0.54$ & $.19$ & $0.59$ & $.15$ \\
\midrule
& \multicolumn{6}{c}{\bf Wikipedia}\\

\rowcolor{LightGrey} none & $1.64$ & $<10^{-3}$ & $1.51$ & $<10^{-3}$ & $1.88$& $<10^{-4}$ \\
CDA & $1.58$ & $<10^{-3}$ & $1.66$ & $<10^{-4}$ & $1.87$ & $<10^{-4}$ \\
gCDA & $1.52$ & $<10^{-3}$ & $1.57$ & $<10^{-3}$ & $1.84$ & $<10^{-4}$ \\
nCDA & $1.06$ & $.02$ & $1.54$ & $<10^{-4}$ & $1.65$ & $<10^{-4}$ \\
\rowcolor{LightGrey} gCDS & $1.45$ & $<10^{-3}$ & $1.53$ & $<10^{-3}$ & $1.87$ & $<10^{-4}$ \\
\rowcolor{LightGrey} nCDS & $1.05$ & $.02$ & $1.37$ & $<10^{-3}$ & $1.65$ & $<10^{-4}$ \\
WED40 & $1.28$ & $<10^{-2}$ & $1.36$ & $<10^{-2}$ & $1.81$ & $<10^{-4}$ \\
WED70 & $1.05$ & $.02$ & $1.24$ & $<10^{-2}$ & $1.67$ & $<10^{-3}$ \\
nWED70 & $-0.46$\phantom{$-$} & $.52$ & $-0.42$\phantom{$-$} & $.51$ & $0.85$ & $.05$ \\
\midrule
\emph{\citeauthor{nosek}} & $0.82$ & $<10^{-2}$ & $1.47$ & $<10^{-24}$ & $0.72$ & $<10^{-2}$ \\
\bottomrule
\end{tabular}
}}
\caption{Direct bias results}
\label{tab:weat-wiki}
\setlength\tabcolsep{6pt}
\end{table}

\paragraph{Direct bias} Table~\ref{tab:weat-wiki} presents the $d$ scores and WEAT one-tailed $p$-values, which indicate whether the difference in samples means between targets $X$ and $Y$ and attributes $A$ and $B$ is significant. We also compute a two-tailed $p$-value to determine whether the difference between the various sets is significant.\footnote{Throughout this paper, we test significance in the differences between the embeddings with a two-tailed Monte Carlo permutation test at significance interval $\alpha=0.01$ with $r=10,000$ permutations.}

On Wikipedia, nWED70 outperforms every other method ($p<0.01$), and even at $\alpha=0.1$ bias was undetectable. %However, nWED70 sometimes goes too far and causes negative bias. 
In all CDA/S variants, the Names Intervention performs significantly %($p<0.01$) 
better than other intervention strategies (average $d$ for nCDS across all tests 0.95 vs.\ 1.39 for the best non-names CDA/S variants). Excluding the Wikipedia careers--family test (in which the CDA and CDS variants are indistinguishable at $\alpha=0.01$), the CDS variants are numerically better than their CDA counterparts in 80\% of the test cases, although many of these differences are not significant. %Furthermore, CDA and gCDA sometimes increase bias ($d$=1.79 and 1.77 vs. 1.74 for the careers--family test on Gigaword; $p<0.01$). As we mentioned earlier, this undesirable effect may result from the corpus duplication process, which would explain why it is not present in the CDS variants.\simone{I read $d$=1.78 for gCDS from the table for career-family, which contradicts your statement that CDS doesn't have this problem. Have I misunderstood something? -- actually, across Art-maths and art-Sciences CDA on its own does very well, gCDA suffers and gCDS falls between them. Names admittedly really help, but don't make a distinction between CDA and CDS.}

Generally, we notice a trend of WED reducing direct gender bias slightly better than CDA/S. Impressively, WED even successfully reduces bias in the careers--family test, where gender information is captured by names, which were not in WED's gender-equalise word-pair list for treatment. 

%\begin{figure}
%\center{\includegraphics[width=\linewidth,trim=0.5cm %0.1cm 0.5cm 0.2cm,clip]
%        {./}}
%\caption{Correlation between the original bias of profession words and the mean original bias of their new neighbourhood following bias mitigation (Wikipedia)}
% \label{fig:prof}
%\end{figure}

%Figure~\ref{fig:prof} shows examples of the %nearest-neighbour bias of profession words in the %Wikipedia embeddings. The X-axis represents the bias %of each profession prior to bias mitigation, and the %Y-axis represents the mean original bias of each %profession's new neighbourhood. %Table~\ref{tab:pearsoncorr} reports the Pearson %correlation coefficient,~$r$, along with its %$p$-values (we test significance at $\alpha=0.01$).

%Concerning the comparison of the mean bias of the %neighbours of the profession words, \cite{hila} %suggests that the less correlation the better. Here, %all three Names variants produce the weakest Pearson %correlation coefficients, although in all cases the %correlation is significant, which is bad. However, %this test is perhaps misleading. If each word's new %neighbourhood had a mean original bias of exactly 0, %this would result in a correlation of $r=1$, and %thus be classified as a failure. But that space would have no indirect bias, perhaps suggesting that the use of Pearson correlation is inappropriate.

\begin{figure}
\centering
\includegraphics[width=0.85\columnwidth,trim=0.2cm 0.5cm 0.3cm 0.3cm,clip]{./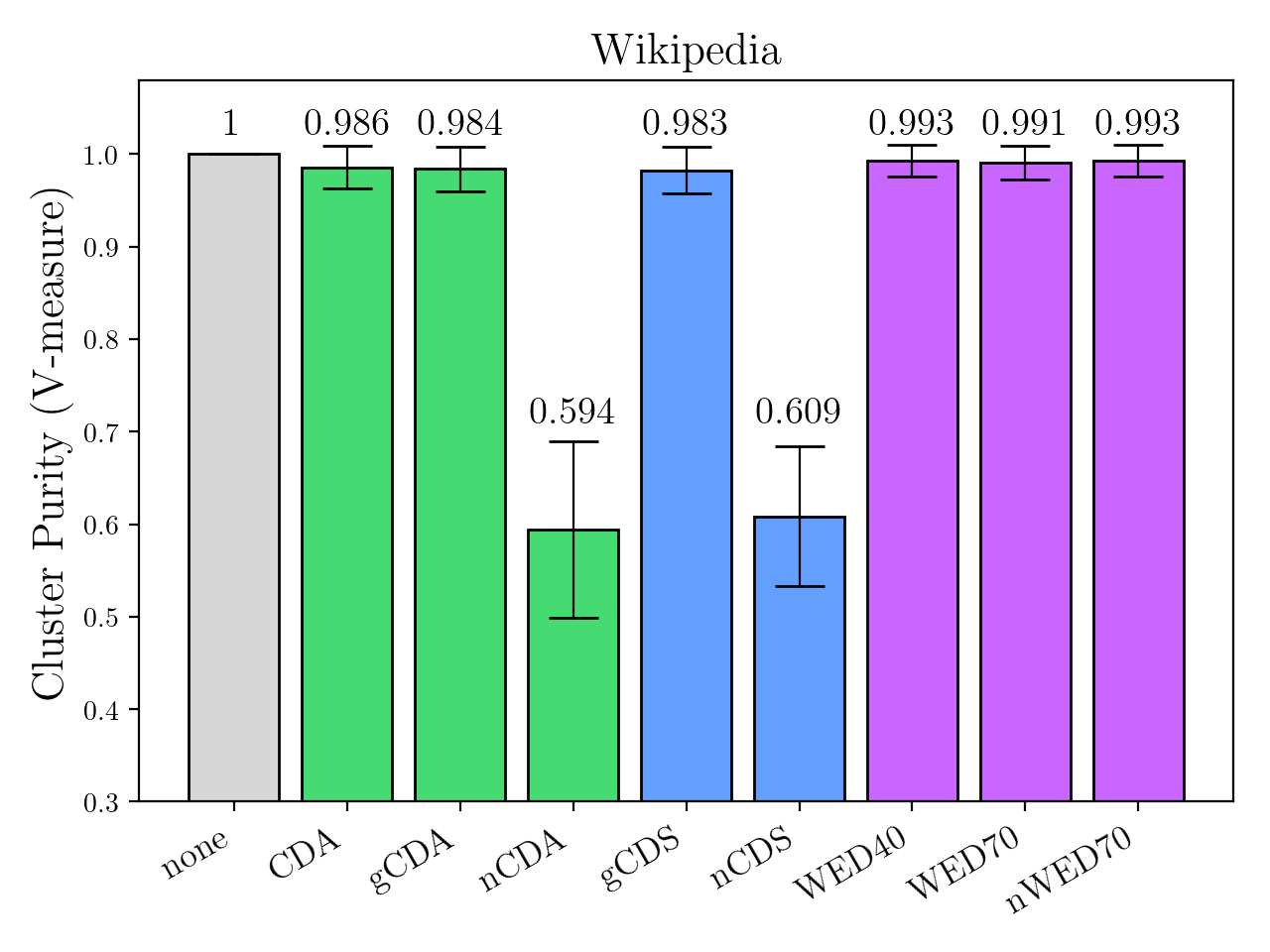}
\caption{Most biased cluster purity results}
\label{fig:purity}
\end{figure}

\begin{figure*}
\center{\includegraphics[width=0.9\linewidth,trim=0.5cm 0.1cm 0.5cm 0.2cm,clip]
        {./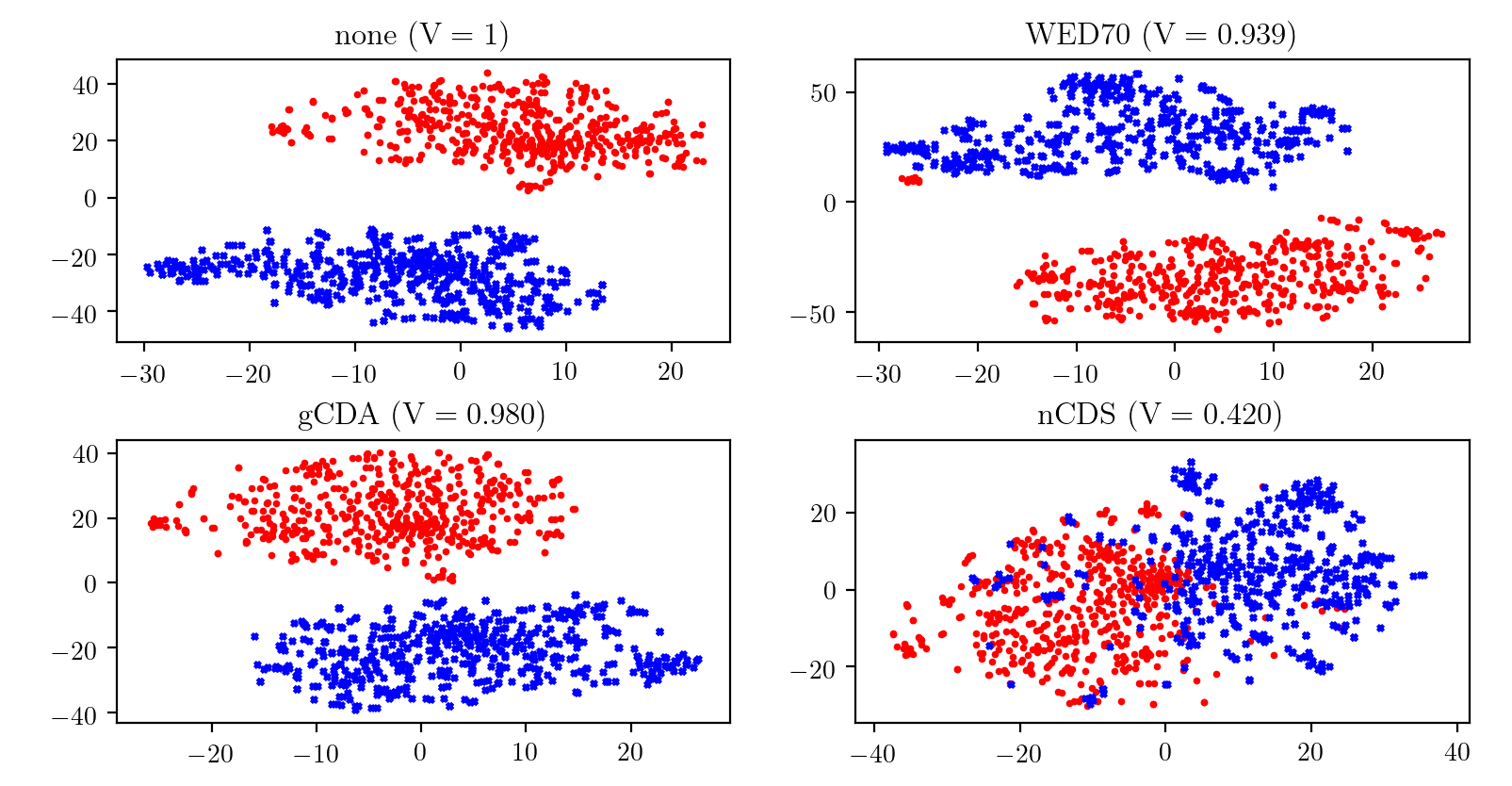}}
\caption{Clustering of biased words (Gigaword)}
 \label{fig:clusters}
\end{figure*}

\paragraph{Indirect bias} Figure~\ref{fig:purity} shows the V-measures of the clusters of the most biased words in Wikipedia for each embedding. Gigaword patterns similarly (see appendix). 
Figure~\ref{fig:clusters} shows example tSNE projections for the Gigaword embeddings (``$\mathrm{V}$'' refers to their V-measures; these examples were chosen as they represent the best results achieved by \citeposs{DBLP:conf/nips/BolukbasiCZSK16} method, \citeposs{lu} method, and our new names variant). 
On both corpora, the new nCDA and nCDS techniques have significantly %($p<0.01$) 
%\simone{Removed a p-value, following the reviewer who suggested we might be overdoing it on the significance reporting.} SHT -- assuming this was OK
lower purity of biased-word cluster than all other evaluated mitigation techniques (0.420 for nCDS on Gigaword, which corresponds to a reduction of purity by 58\% compared to the unmitigated embedding, and 0.609 (39\%) on Wikipedia). nWED70's V-Measure is significantly higher than either of the other Names variants (reduction of
11\% on Gigaword, only 1\% on Wikipedia), suggesting that the success of nCDS and nCDA is not merely due to their larger list of gender-words. % From the fact that nCDA outperforms CDA we can also infer that the success isn't solely due to CDA either---it's the combination of the two that does the trick. %\looseness -1

\begin{figure}
\centering
\includegraphics[width=0.85\columnwidth,trim=0.2cm 0.5cm 0.3cm 0.3cm,clip]{./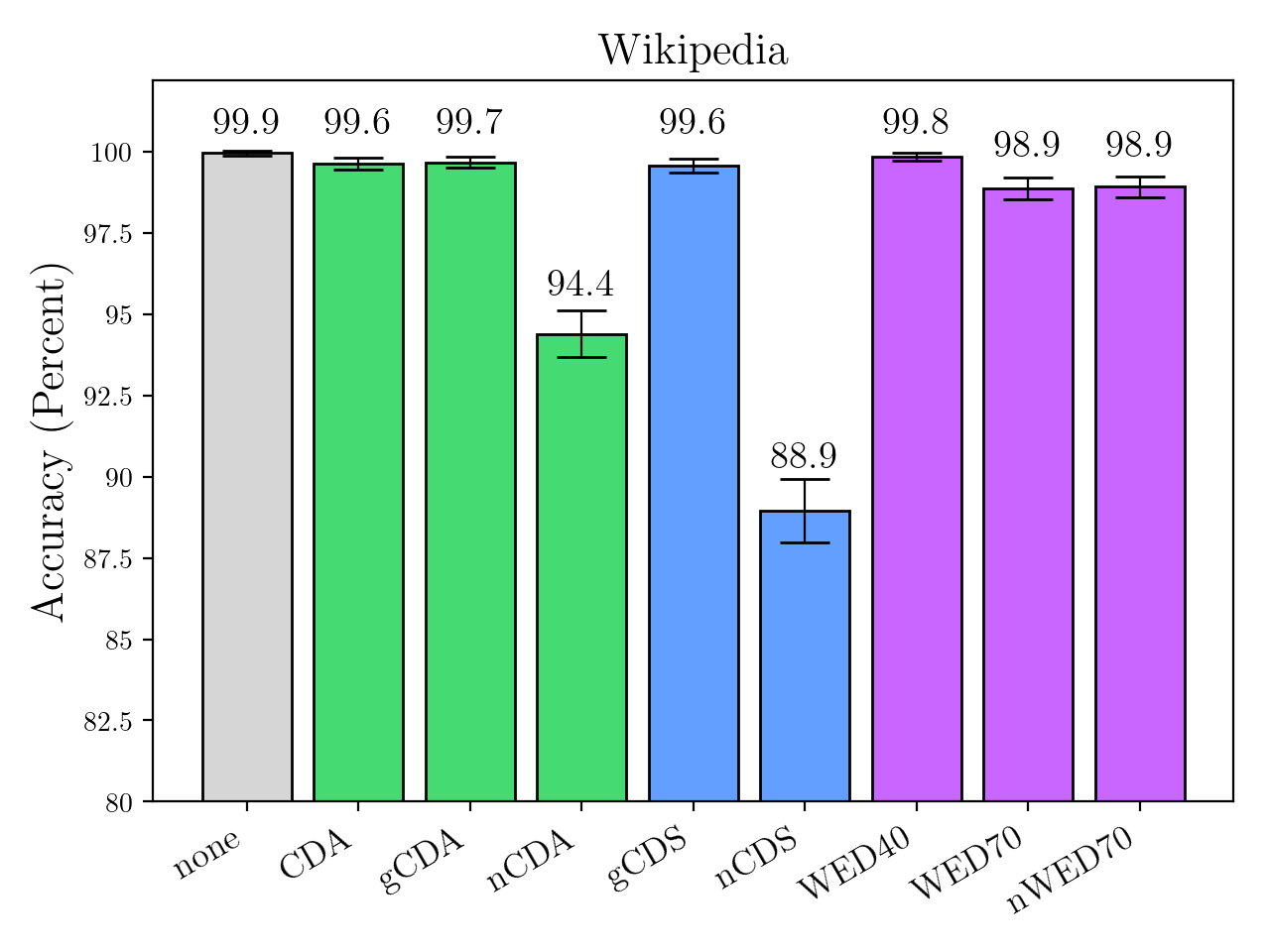}
\caption{Reclassification of most biased words results}
\label{fig:classifier}
\end{figure}

Figure~\ref{fig:classifier} shows the results of the second test of indirect bias, and reports the accuracy of a classifier trained to reclassify previously gender biased words on the Wikipedia embeddings (Gigaword patterns similarly).\footnote{The 95\% confidence interval is calculated by a Wilson score interval, i.e., assuming a normal distribution.} These results reinforce the finding of the clustering experiment: once again, nCDS outperforms all other methods significantly on both corpora ($p<0.01$), although it should be noted that the successful reclassification rate remains relatively high (e.g. 88.9\% on Wikipedia).

We note that nullifying indirect bias associations entirely is not necessarily the goal of debiasing, since some of these may result from causal links in the domain. For example, whilst associations between \word{man} and \word{engineer} and between \word{man} and \word{car} are each stereotypic (and thus could be considered examples of direct bias), an association between \word{engineer} and \word{car} might well have little to do with gender bias, and so should not be mitigated.

%\begin{table}
%\setlength{\thickmuskip}{0.1pt}
%\centering
%{\small
%\begin{tabular}{l*{4}{c}}
%\toprule
%& \multicolumn{2}{c}{Wikipedia} & %\multicolumn{2}{c}{Gigaword}  \\
%Method & $r$ & $p$ & $r$ & $p$ \\ 
% \midrule 
%\rowcolor{LightGrey} none       		& $0.917$ & %$<10^{-116}$ & $0.909$ & $<10^{-111}$ \\ 
%CDA      	& $0.829$ & $<10^{-74}$  & $0.832$ & %$<10^{-75}$ \\ 
%gCDA   & $0.827$ & $<10^{-73}$  & $0.860$ & %$<10^{-85}$ \\ 
%nCDA 		& $0.810$ & $<10^{-68}$  & $0.790$ & %$<10^{-62}$ \\ 
%\rowcolor{LightGrey} gCDS & $0.825$ & $<10^{-87}$  & %$0.864$ & $<10^{-35}$ \\ 
%\rowcolor{LightGrey} nCDS 	& $0.801$ & $<10^{-58}$  %& $0.773$ & $<10^{-34}$\\ 
%WED40 & $0.834$ & $<10^{-75}$ & $0.810$ & %$<10^{-68}$ \\ 
%WED70  & $0.813$ & $<10^{-69}$ & $0.812$ & %$<10^{-68}$ \\ 
%nWED70  & $0.811$ & $<10^{-68}$ & $0.735$ & %<10^{-50}$ \\
%\bottomrule
%\end{tabular}
%}
%\caption{Former bias of professions' new-neighbourhood results}
%\label{tab:pearsoncorr}
%\end{table}

\begin{table}
\centering
{\small
\begin{tabular}{l*{2}{c}}
\toprule
Method & \multicolumn{2}{c}{$r_s$} \\
\midrule
& \textbf{Gigaword} & \textbf{Wikipedia} \\ 
\rowcolor{LightGrey} none       		& $0.385$  & $0.368$   \\ 
CDA      	& $0.381$  & $0.363$   \\ 
gCDA   & $0.381$  & $0.363$  \\ 
nCDA 		& $0.380$ & $0.365$  \\ 
\rowcolor{LightGrey} gCDS & $0.382$   & $0.366$   \\ 
\rowcolor{LightGrey} nCDS 	& $0.380$  & $0.362$  \\ 
WED40 & $0.386$  & $0.371$   \\ 
WED70  & $0.395$  & $0.375$  \\ 
nWED70  & $0.384$ & $0.367$  \\ 
\bottomrule
\end{tabular}
}
\caption{Word similarity Results}
\label{tab:simlex}
\end{table}

\paragraph{Word similarity} Table~\ref{tab:simlex} reports the SimLex-999 Spearman rank-order correlation coefficients~$r_s$ (all are significant, $p<0.01$). Surprisingly, the WED40 and 70 methods outperform the unmitigated embedding, although the difference in result is small (0.386 and 0.395 vs.\ 0.385 on Gigaword, 0.371 and 0.367 vs.\ 0.368 on Wikipedia). nWED70, on the other hand, performs worse than the unmitigated embedding (0.384 vs.\ 0.385 on Gigaword, 0.367 vs.\ 0.368 on Wikipedia). CDA and CDS methods do not match the quality of the unmitigated space, but once again the difference is small. \simone{Second Part of Reaction to Reviewer 4.}It should be noted that since SimLex-999 was produced by human raters, it will reflect the human biases these methods were designed to remove, so worse performance might result from successful bias mitigation.

\begin{figure}
\centering
\includegraphics[width=0.85\columnwidth,trim=0.2cm 0.5cm 0.3cm 0.3cm,clip]{./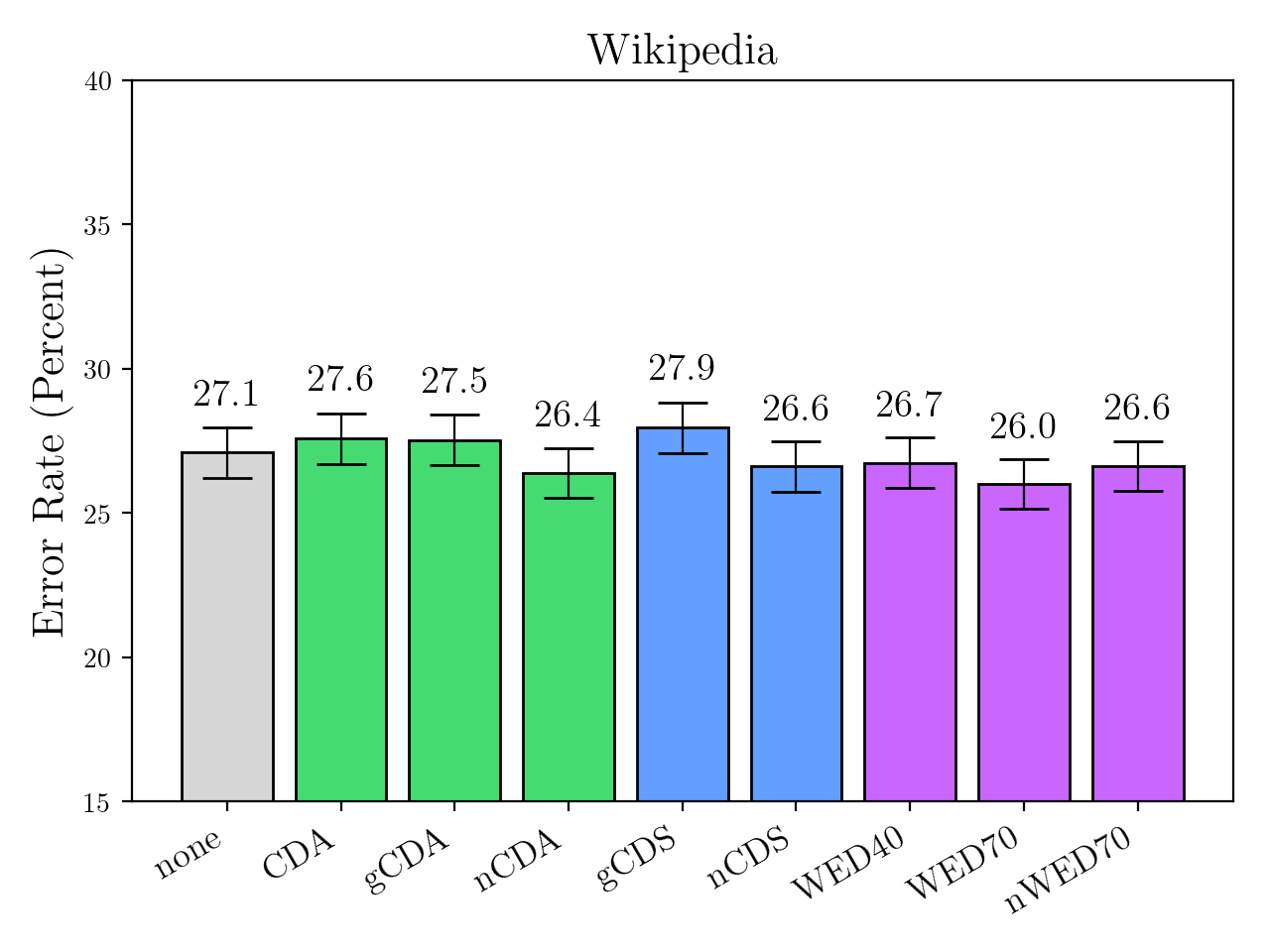}
\caption{Sentiment classification results}
\label{fig:sentiment}
\end{figure}

\paragraph{Sentiment classification} Figure~\ref{fig:sentiment} shows the sentiment classification error rates for Wikipedia (Gigaword patterns similarly).
Results are somewhat inconclusive. While WED70 significantly improves the performance of the sentiment classifier from the unmitigated embedding on both corpora ($p<0.05$), 
the improvement is small (never more than 1.1\%). On both corpora, nothing outperforms WED70 or the Names Intervention variants. \looseness -1

\begin{figure}
\centering
\includegraphics[width=0.85\columnwidth,trim=0.2cm 0.5cm 0.3cm 0.3cm,clip]{./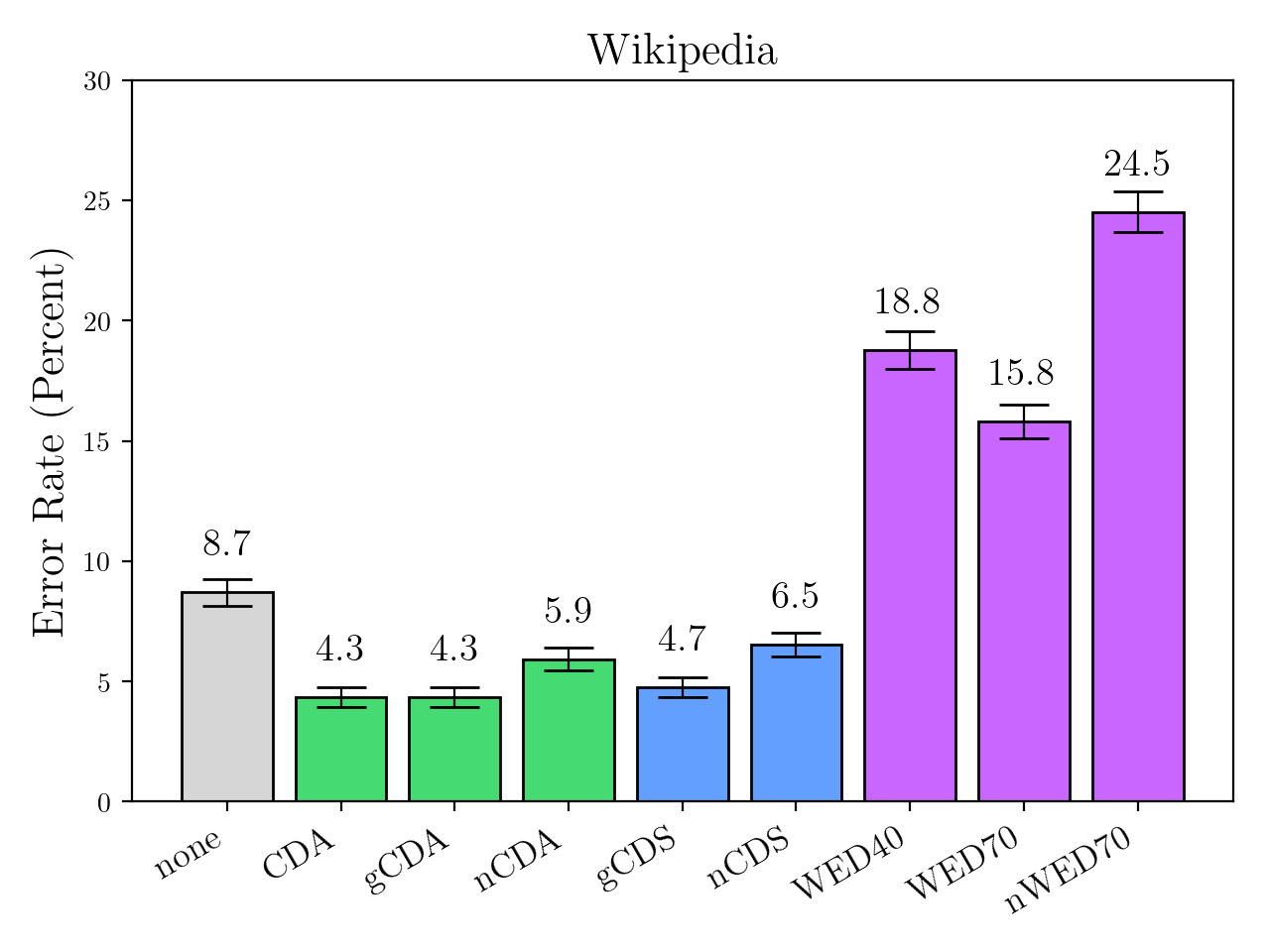}
\caption{Non-biased gender analogy results}
\label{fig:family}
\end{figure}

\paragraph{Non-biased gender analogies} Figure~\ref{fig:family} shows the error rates for non-biased gender analogies for Wikipedia. CDA and CDS are numerically better than the unmitigated embeddings (an effect which is always significant on Gigaword, shown in the appendices, but sometimes insignificant on Wikipedia).  The WED variants, on the other hand, perform significantly worse than the unmitigated sets on both corpora (27.1 vs.\ 9.3\% for the best WED variant on Gigaword; 18.8 vs.\ 8.7\% on Wiki\-pedia).
%; $p<0.01$ for both
WED thus seems to remove too much gender information, whilst CDA and CDS create an improved space, perhaps because they reduce the effect of stereotypical associations which were previously used incorrectly when drawing analogies.

\section{Conclusion}

We have replicated two state-of-the-art bias mitigation techniques, WED and CDA, on two large corpora, Wikipedia and the English Gigaword. In our empirical comparison, we found that although both methods mitigate direct gender bias and maintain the interpretability of the space, WED failed to maintain a robust representation of gender (the best variants had an error rate of 23\% average when drawing non-biased analogies, suggesting that too much gender information was removed). A new variant of CDA we propose (the Names Intervention) is the only to successfully mitigate indirect gender bias: following its application, previously biased words are significantly less clustered according to gender, with an average of 49\% reduction in cluster purity when clustering the most biased words. We also proposed Counterfactual Data Substitution, which generally performed better than the CDA equivalents, was notably quicker to compute (as Word2Vec is linear in corpus size), and in theory allows for multiple intervention layers without a corpus becoming exponentially large.

A fundamental limitation of all the methods compared is their reliance on predefined lists of gender words, in particular of pairs. \citeauthor{lu}'s pairs of \word{manager}::\word{manageress} and \word{murderer}::\word{murderess} may be counterproductive, as their augmentation method perpetuates a male reading of \word{manager}, which has become gender-neutral over time. Other issues arise from differences in spelling (e.g. \word{mum} vs. \word{mom}) and morphology (e.g. \word{his} vs. \word{her} and \word{hers}). Biologically-rooted terms like \word{breastfeed} or \word{uterus} do not lend themselves to pairing either. The strict use of pairings also imposes a gender binary, and as a result non-binary identities are all but ignored in the bias mitigation literature. \rowan{added this para back in and chopped it up a bit, look okay?}

Future work could extend the Names Intervention to names from other languages beyond the US-based gazetteer used here. Our method only allows for there to be an equal number of male and female names, but if this were not the case one ought to explore the possibility of a many-to-one mapping, or perhaps a probablistic approach (though difficulties would be encountered sampling simultaneously from two distributions, frequency and gender-specificity). A mapping between nicknames (not covered by administrative sources) and formal names could be learned from a corpus for even wider coverage, possibly via the intermediary of coreference chains. Finally, given that names have been used in psychological literature as a proxy for race
(e.g. \citeauthor{greenwald}), the Names Intervention could also be used to mitigate racial biases (something which, to the authors' best knowledge, has never been attempted), but finding pairings could prove problematic. It is important that other work looks into operationalising bias beyond the subspace definition proposed by \citet{DBLP:conf/nips/BolukbasiCZSK16}, as it is becoming increasingly evident that gender bias is not linear in embedding space. \looseness -1

\section*{Acknowledgments}
We would like to thank Francisco Vargas Palomo for pointing out a few typos in the proofs \cref{sec:proofs}
post publication.

\bibliography{emnlp-ijcnlp-2019}

\begin{thebibliography}{19}
\expandafter\ifx\csname natexlab\endcsname\relax\def\natexlab#1{#1}\fi

\bibitem[{Bamman and Smith(2014)}]{bamman}
David Bamman and Noah~A. Smith. 2014.
\newblock \href {https://www.aclweb.org/anthology/Q14-1029} {Unsupervised
  discovery of biographical structure from text}.
\newblock \emph{Transactions of the Association for Computational Linguistics},
  2:363--376.

\bibitem[{Bolukbasi et~al.(2016)Bolukbasi, Chang, Zou, Saligrama, and
  Kalai}]{DBLP:conf/nips/BolukbasiCZSK16}
Tolga Bolukbasi, Kai{-}Wei Chang, James~Y. Zou, Venkatesh Saligrama, and
  Adam~Tauman Kalai. 2016.
\newblock \href
  {http://papers.nips.cc/paper/6228-man-is-to-computer-programmer-as-woman-is-to-homemaker-debiasing-word-embeddings}
  {Man is to computer programmer as woman is to homemaker? {D}ebiasing word
  embeddings}.
\newblock In \emph{Advances in Neural Information Processing Systems 29: Annual
  Conference on Neural Information Processing Systems 2016, December 5-10,
  2016, Barcelona, Spain}, pages 4349--4357.

\bibitem[{Caliskan et~al.(2017)Caliskan, Bryson, and Narayanan}]{caliskan}
Aylin Caliskan, Joanna~J. Bryson, and Arvind Narayanan. 2017.
\newblock \href
  {http://arxiv.org/abs/https://science.sciencemag.org/content/356/6334/183.full.pdf}
  {Semantics derived automatically from language corpora contain human-like
  biases}.
\newblock \emph{Science}, 356(6334):183--186.

\bibitem[{Gonen and Goldberg(2019)}]{hila}
Hila Gonen and Yoav Goldberg. 2019.
\newblock Lipstick on a pig: {D}ebiasing methods cover up systematic gender
  biases in word embeddings but do not remove them.
\newblock In \emph{Proceedings of the 2019 Conference of the North American
  Chapter of the Association for Computational Linguistics: Human Language
  Technologies, Volume 1 (Long Papers)}, Minneapolis, Minnesota. Association
  for Computational Linguistics.

\bibitem[{Graells-Garrido et~al.(2015)Graells-Garrido, Lalmas, and
  Menczer}]{firstwomen}
Eduardo Graells-Garrido, Mounia Lalmas, and Filippo Menczer. 2015.
\newblock \href {http://doi.acm.org/10.1145/2700171.2791036} {First women,
  second sex: Gender bias in {W}ikipedia}.
\newblock In \emph{Proceedings of the 26th ACM Conference on Hypertext \&\#38;
  Social Media}, HT '15, pages 165--174, New York, NY, USA. ACM.

\bibitem[{Greenwald et~al.(1998)Greenwald, McGhee, and Schwartz}]{greenwald}
Anthony~G. Greenwald, Debbie~E. McGhee, and Jordan L.~K. Schwartz. 1998.
\newblock Measuring individual differences in implicit cognition: {T}he
  {I}mplicit {A}ssociation {T}est.
\newblock \emph{Journal of Personality and Social Psychology},
  74(6):1464--1480.

\bibitem[{Hill et~al.(2015)Hill, Reichart, and Korhonen}]{simlex}
Felix Hill, Roi Reichart, and Anna Korhonen. 2015.
\newblock \href {https://www.aclweb.org/anthology/J15-4004} {{S}im{L}ex-999:
  Evaluating semantic models with (genuine) similarity estimation}.
\newblock \emph{American Journal of Computational Linguistics}, 41(4):665--695.

\bibitem[{Kuhn(1955)}]{kuhn}
Harold~W Kuhn. 1955.
\newblock The hungarian method for the assignment problem.
\newblock \emph{Naval research logistics quarterly}, 2(1-2):83--97.

\bibitem[{Lau and Baldwin(2016)}]{w2vd2v}
Jey~Han Lau and Timothy Baldwin. 2016.
\newblock \href {https://www.aclweb.org/anthology/W16-1609} {An empirical
  evaluation of doc2vec with practical insights into document embedding
  generation}.
\newblock In \emph{Proceedings of the 1st Workshop on Representation Learning
  for {NLP}}, pages 78--86, Berlin, Germany. Association for Computational
  Linguistics.

\bibitem[{Le and Mikolov(2014)}]{doc2vec}
Quoc Le and Tomas Mikolov. 2014.
\newblock \href {http://proceedings.mlr.press/v32/le14.html} {Distributed
  representations of sentences and documents}.
\newblock In \emph{Proceedings of the 31st International Conference on Machine
  Learning}, volume~32 of \emph{Proceedings of Machine Learning Research},
  pages 1188--1196, Bejing, China. PMLR.

\bibitem[{Lu et~al.(2018)Lu, Mardziel, Wu, Amancharla, and Datta}]{lu}
Kaiji Lu, Piotr Mardziel, Fangjing Wu, Preetam Amancharla, and Anupam Datta.
  2018.
\newblock \href {http://arxiv.org/abs/1807.11714} {Gender bias in neural
  natural language processing}.
\newblock \emph{CoRR}, abs/1807.11714.

\bibitem[{van~der Maaten and Hinton(2008)}]{tSNE}
Laurens van~der Maaten and Geoffrey Hinton. 2008.
\newblock \href {http://www.jmlr.org/papers/v9/vandermaaten08a.html}
  {Visualizing data using {t-SNE}}.
\newblock \emph{Journal of Machine Learning Research}, 9:2579--2605.

\bibitem[{Mikolov et~al.(2013)Mikolov, Sutskever, Chen, Corrado, and
  Dean}]{NIPS2013_5021}
Tomas Mikolov, Ilya Sutskever, Kai Chen, Greg~S. Corrado, and Jeff Dean. 2013.
\newblock \href
  {http://papers.nips.cc/paper/5021-distributed-representations-of-words-and-phrases-and-their-compositionality.pdf}
  {Distributed representations of words and phrases and their
  compositionality}.
\newblock In C.~J.~C. Burges, L.~Bottou, M.~Welling, Z.~Ghahramani, and K.~Q.
  Weinberger, editors, \emph{Advances in Neural Information Processing Systems
  26}, pages 3111--3119. Curran Associates, Inc.

\bibitem[{Napoles et~al.(2012)Napoles, Gormley, and {Van Durme}}]{gigaword}
Courtney Napoles, Matthew Gormley, and Benjamin {Van Durme}. 2012.
\newblock Annotated gigaword.
\newblock In \emph{{AKBC-WEKEX} Workshop at {NAACL} 2012}.

\bibitem[{Nosek et~al.(2002)Nosek, Banaji, and Greenwald}]{nosek}
Brian~A. Nosek, Mahzarin~R. Banaji, and Anthony~G Greenwald. 2002.
\newblock Harvesting implicit group attitudes and beliefs from a demonstration
  web site.
\newblock \emph{Group Dynamics: Theory, Research, and Practice}, 6 1:101--115.

\bibitem[{Reagle and Rhue(2011)}]{wikibrit}
Joseph Reagle and Lauren Rhue. 2011.
\newblock \href {https://ijoc.org/index.php/ijoc/article/view/777} {Gender bias
  in {W}ikipedia and {B}ritannica}.
\newblock \emph{International Journal of Communication}, 5(0).

\bibitem[{Rosenberg and Hirschberg(2007)}]{vmeasure}
Andrew Rosenberg and Julia Hirschberg. 2007.
\newblock \href {https://www.aclweb.org/anthology/D07-1043} {{V}-measure: A
  conditional entropy-based external cluster evaluation measure}.
\newblock In \emph{Proceedings of the 2007 Joint Conference on Empirical
  Methods in Natural Language Processing and Computational Natural Language
  Learning ({EMNLP}-{C}o{NLL})}, pages 410--420, Prague, Czech Republic.
  Association for Computational Linguistics.

\bibitem[{Rudinger et~al.(2018)Rudinger, Naradowsky, Leonard, and
  Van~Durme}]{CorefRes}
Rachel Rudinger, Jason Naradowsky, Brian Leonard, and Benjamin Van~Durme. 2018.
\newblock \href {https://www.aclweb.org/anthology/N18-2002} {Gender bias in
  coreference resolution}.
\newblock In \emph{Proceedings of the 2018 Conference of the North {A}merican
  Chapter of the Association for Computational Linguistics: Human Language
  Technologies, Volume 2 (Short Papers)}, pages 8--14, New Orleans, Louisiana.
  Association for Computational Linguistics.

\bibitem[{Zhao et~al.(2018)Zhao, Wang, Yatskar, Ordonez, and Chang}]{CorefRes2}
Jieyu Zhao, Tianlu Wang, Mark Yatskar, Vicente Ordonez, and Kai-Wei Chang.
  2018.
\newblock \href {https://www.aclweb.org/anthology/N18-2003} {Gender bias in
  coreference resolution: Evaluation and debiasing methods}.
\newblock In \emph{Proceedings of the 2018 Conference of the North {A}merican
  Chapter of the Association for Computational Linguistics: Human Language
  Technologies, Volume 2 (Short Papers)}, pages 15--20, New Orleans, Louisiana.
  Association for Computational Linguistics.

\end{thebibliography}
\bibliographystyle{acl_natbib}
%\end{document}

\newpage
\onecolumn
\appendix

\section{Proofs for method from \citet{DBLP:conf/nips/BolukbasiCZSK16}}\label{sec:proofs}
We found the equations suggested in \newcite{DBLP:conf/nips/BolukbasiCZSK16} on the opaque side of things. So we provide here proofs missing from the original work ourselves.

\begin{prop}
\newcite{DBLP:conf/nips/BolukbasiCZSK16} define
\begin{equation}
    \vec{w} = \nu + \sqrt{1 - ||\nu||_2^2} \frac{\vec{w}_B - \mu_B}{||\vec{w}_B - \mu_B||_2}
\end{equation}
where they define $\nu = \mu - \mu_{B}$. This vector is a unit vector, i.e. $||\vec{w}||_2 = 1$.
\end{prop}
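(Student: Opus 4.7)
The plan is to recognise that the two summands defining $\vec{w}$ are orthogonal, so that the squared norm decomposes by Pythagoras. First I would unpack the definitions: since $\mu_B$ is the orthogonal projection of $\mu$ onto the subspace $B$, the residual $\nu = \mu - \mu_B$ lies in the orthogonal complement $B^{\perp}$. Meanwhile, $\vec{w}_B$ is by construction in $B$, and so is $\mu_B$, hence $\vec{w}_B - \mu_B \in B$. Consequently $\nu$ is orthogonal to $\vec{w}_B - \mu_B$, and therefore also to the rescaled unit vector $(\vec{w}_B - \mu_B)/\|\vec{w}_B - \mu_B\|_2$ that appears in the formula.

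Next I would compute $\|\vec{w}\|_2^2$ using this orthogonality. Writing $\vec{w} = \nu + \alpha \, \vec{u}$ with $\alpha = \sqrt{1 - \|\nu\|_2^2}$ and $\vec{u} = (\vec{w}_B - \mu_B)/\|\vec{w}_B - \mu_B\|_2$, the cross term $2 \langle \nu, \alpha \vec{u} \rangle$ vanishes, leaving
\begin{equation}
\|\vec{w}\|_2^2 \;=\; \|\nu\|_2^2 + \alpha^2 \|\vec{u}\|_2^2 \;=\; \|\nu\|_2^2 + \bigl(1 - \|\nu\|_2^2\bigr) \cdot 1 \;=\; 1.
\end{equation}
Taking square roots yields $\|\vec{w}\|_2 = 1$, as required.

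The only real obstacle is checking that the expression is well-defined, i.e.\ that $\|\nu\|_2 \le 1$ so the square root is real, and that $\vec{w}_B \ne \mu_B$ so the denominator is nonzero. Both of these are implicit regularity assumptions in the \citet{DBLP:conf/nips/BolukbasiCZSK16} setup (the equalise pairs they use satisfy $\|\mu\|_2 \le 1$ after the preceding normalisation of embeddings, forcing $\|\nu\|_2 \le \|\mu\|_2 \le 1$), and I would simply flag them as standing hypotheses rather than re-derive them. Modulo these caveats, the argument is a one-line Pythagorean calculation once the orthogonal decomposition $\mathbb{R}^d = B \oplus B^{\perp}$ has been made explicit.
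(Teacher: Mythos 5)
Your proposal is correct and follows essentially the same route as the paper's proof: expand $\|\vec{w}\|_2^2$, observe that the cross term vanishes because $\nu = \mu - \mu_B = \mu_{\perp B}$ is orthogonal to $\vec{w}_B - \mu_B \in B$, and the remaining terms $\|\nu\|_2^2 + (1 - \|\nu\|_2^2)$ sum to $1$. Your explicit flagging of the well-definedness conditions ($\|\nu\|_2 \le 1$ and $\vec{w}_B \ne \mu_B$) is a small addition the paper omits, but the core argument is identical.
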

\begin{proof}
\begin{align*}
||\vec{w}||_2^2 {}={} &\vec{w}^{\top}\vec{w} \\
                {}={} &\left(\nu + \sqrt{1 - ||\nu||_2^2}\,\frac{\vec{w}_B - \mu_B}{||\vec{w}_B - \mu_B||_2}\right)^{\top} \\
                  &\left(\nu + \sqrt{1 - ||\nu||_2^2}\,\frac{\vec{w}_B - \mu_B}{||\vec{w}_B - \mu_B||_2}\right) \\
                {}={} & ||\nu||_2^2 + 2 \nu^{\top}\left(\sqrt{1 - ||\nu||_2^2}\,\frac{\vec{w}_B - \mu_B}{||\vec{w}_B - \mu_B||_2}\right) \\
                  & + \left(\sqrt{1 - ||\nu||_2^2}\,\frac{\vec{w}_B - \mu_B}{||\vec{w}_B - \mu_B||_2}\right)^{\top} \\
                  & \left(\sqrt{1 - ||\nu||_2^2}\,\frac{\vec{w}_B - \mu_B}{||\vec{w}_B - \mu_B||_2}\right) \\
                {}={} & ||\nu||_2^2 + 2 \nu^{\top}\left(\sqrt{1 - ||\nu||_2^2}\,\frac{\vec{w}_B - \mu_B}{||\vec{w}_B - \mu_B||_2}\right) \\ 
                  & + 1 - ||\nu||_2^2 \\
                {}={} & 2 \nu^{\top}\left(\sqrt{1 - ||\nu||_2^2}\,\frac{\vec{w}_B - \mu_B}{||\vec{w}_B - \mu_B||_2}\right) + 1  \\
                {}={} & 1
\end{align*}
where we note that $\nu = \mu - \mu_{B} = \mu_{\perp B}$ so it is orthogonal to both $\vec{w}_B$ and $\vec{\mu}_B$ by construction. 
\end{proof}

\begin{prop}
The equalise step of \newcite{DBLP:conf/nips/BolukbasiCZSK16} ensures that gendered pairs, e.g. \word{man}--\word{woman}, are equidistant to all gender-neutral words. 
\end{prop}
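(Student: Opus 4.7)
The plan is to exploit the explicit formula for the post-equalise vectors given in the previous proposition, together with the fact that any word $\vec{n}$ which has been neutralised in the first phase of WED lies in $B^{\perp}$ (that is, $\vec{n}_B = 0$). Writing the equalised versions of the two paired words as
\begin{equation*}
    \vec{w}^{(i)} = \nu + \sqrt{1 - ||\nu||_2^2}\,\frac{\vec{w}^{(i)}_B - \mu_B}{||\vec{w}^{(i)}_B - \mu_B||_2}, \qquad i \in \{1,2\},
\end{equation*}
with $\nu = \mu - \mu_B = \mu_{\perp B}$ and $\mu = \tfrac{1}{2}(\vec{w}^{(1)} + \vec{w}^{(2)})$ computed before equalisation, I want to show that $||\vec{w}^{(1)} - \vec{n}||_2 = ||\vec{w}^{(2)} - \vec{n}||_2$ for every neutralised $\vec{n}$.

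First I would read off the orthogonal decomposition of $\vec{w}^{(i)}$ with respect to $B$: the first summand $\nu$ lies entirely in $B^{\perp}$, while the second summand lies entirely in $B$ because $\vec{w}^{(i)}_B, \mu_B \in B$. Hence the component of $\vec{w}^{(i)}$ perpendicular to $B$ is exactly $\nu$, independent of $i$, and the component in $B$ has norm exactly $\sqrt{1 - ||\nu||_2^2}$, again independent of $i$. This is the crucial symmetry: both members of the pair are forced to share a common off-$B$ component and to have equal $B$-norm, so they differ only in the direction (within $B$) of their bias component.

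Next I would invoke the neutralisation step to write $\vec{n} \in B^{\perp}$, and decompose
\begin{equation*}
    \vec{w}^{(i)} - \vec{n} = \underbrace{\sqrt{1 - ||\nu||_2^2}\,\frac{\vec{w}^{(i)}_B - \mu_B}{||\vec{w}^{(i)}_B - \mu_B||_2}}_{\in B} + \underbrace{(\nu - \vec{n})}_{\in B^{\perp}}.
\end{equation*}
By orthogonality of $B$ and $B^{\perp}$, Pythagoras gives
\begin{equation*}
    ||\vec{w}^{(i)} - \vec{n}||_2^2 = (1 - ||\nu||_2^2) + ||\nu - \vec{n}||_2^2,
\end{equation*}
and the right-hand side does not depend on $i$, which concludes the argument.

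The proof is essentially a bookkeeping exercise once the two ingredients are in place, so there is no serious obstacle. The step that is easiest to overlook is the dependence on the neutralise phase: the claim is only meaningful for gender-neutral words, and without $\vec{n}_B = 0$ the Pythagorean split fails because the $B$-component of $\vec{w}^{(i)} - \vec{n}$ would then depend on $i$ through a cross term with $\vec{n}_B$. I would therefore make that hypothesis explicit at the start of the proof and be careful to note that $\nu \perp (\vec{w}^{(i)}_B - \mu_B)$ by the same orthogonality used in the previous proposition.
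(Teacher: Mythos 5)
Your proposal is correct and rests on the same two facts as the paper's proof: neutralised words lie in $B^{\perp}$, and the equalised pair share the common component $\nu \in B^{\perp}$ while their $B$-components have equal norm $\sqrt{1-||\nu||_2^2}$. The only difference is presentational — you organise the computation as an orthogonal (Pythagorean) split of $\vec{w}^{(i)} - \vec{n}$, whereas the paper expands $||\vec{e}-\vec{w}||^2$ directly using $\vec{e}\cdot\vec{w} = \vec{e}\cdot\nu$ and the unit-norm facts; both yield the same $i$-independent quantity.
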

\begin{proof}
%The normalized vectors for gendered words are orthogonal to those gender-neutral words by construction. Thus, the distance in both cases is simply $\nu$. 
Following \citeauthor{DBLP:conf/nips/BolukbasiCZSK16}, we define $\vec{e}$ and $\vec{w}$ as follows:
\begin{align*}
    \vec{e} &:= \frac{\vec{e} - \vec{e}_{B}}{||\vec{e} - \vec{e}_{B}||} = \frac{\vec{e}_{\perp B}}{||\vec{e}_{\perp B}||}\\
    \vec{w} &:= \nu + \sqrt{1 - ||\nu||^2} \frac{\vec{w}_{B} - \vec{\mu}_B}{||\vec{w}_B - \vec{\mu}_B||}
\end{align*}
Now, we have the result that
\begin{equation}
\vec{e} \cdot \vec{w} = \vec{e}\cdot \nu
\end{equation}
which is the same for any $\vec{e}$. Now, we may compute
the distance between $\vec{w}$ and any vector $\vec{e}$ as
\begin{align*}
    ||\vec{e} - \vec{w}||^2 &= \left(\vec{e} - \vec{w}\right) \cdot \left(\vec{e} - \vec{w}\right) \\
     &= \vec{e} \cdot \vec{e} - 2 \vec{e}\cdot \vec{w} + \vec{w} \cdot \vec{w} \\
     &= 2 - 2 \vec{e} \cdot \nu
\end{align*}
\end{proof}

%\clearpage

\section{WEAT word sets}

Below are listed the word sets we used for the WEAT to test direct bias, as defined by \citet{nosek}. Note that for the careers--family test, the target and attribute words have been reversed; that is, gender is captured by the target words, rather than the attribute words. Whilst this distinction is important in the source psychological literature \citep{greenwald}, mathematically the target sets and attribute sets are indistinguishable and fully commutative.

\paragraph{Art--Maths} \noindent $\text{Target}_X$: \emph{math, algebra, geometry, calculus, equations, computation, numbers, addition}; $\text{Target}_Y$: \emph{poetry, art, dance, literature, novel, symphony, drama, sculpture}; $\text{Attribute}_A$: \emph{male, man, boy, brother, he, him, his, son}; $\text{Attribute}_B$: \emph{female, woman, girl, sister, she, her, hers, daughter}

\paragraph{Arts--Sciences} 
$\text{Target}_X$: \emph{science, tech\-nology, physics, chemistry, Einstein, NASA, experiment, astronomy};
$\text{Target}_Y$: \emph{poetry, art, Shakespeare, dance, literature, novel, symphony, drama}; 
$\text{Attribute}_A$: \emph{brother, father, uncle, grandfather, son, he, his, him}; 
$\text{Attribute}_B$: \emph{sister, mother, aunt, grandmother, daughter, she, hers, her}

\paragraph{Careers--Family} $\text{Target}_X$: \emph{John, Paul, Mike, Kevin, Steve, Greg, Jeff, Bill}; $\text{Target}_Y$: \emph{Amy, Joan, Lisa, Sarah, Diana, Kate, Ann, Donna}; $\text{Attribute}_A$: \emph{executive, management, professional, corporation, salary, office, business, career}; $\text{Attribute}_B$: \emph{home, parents, children, family, cousins, marriage, wedding, relatives}

\section{Additional Gigaword results} 

Additional results for the Annotated English Gigaword are given here. 

\begin{figure} [H]
\centering
\includegraphics[width=0.5\columnwidth,trim=0.2cm 0.5cm 0.3cm 0.3cm,clip]{./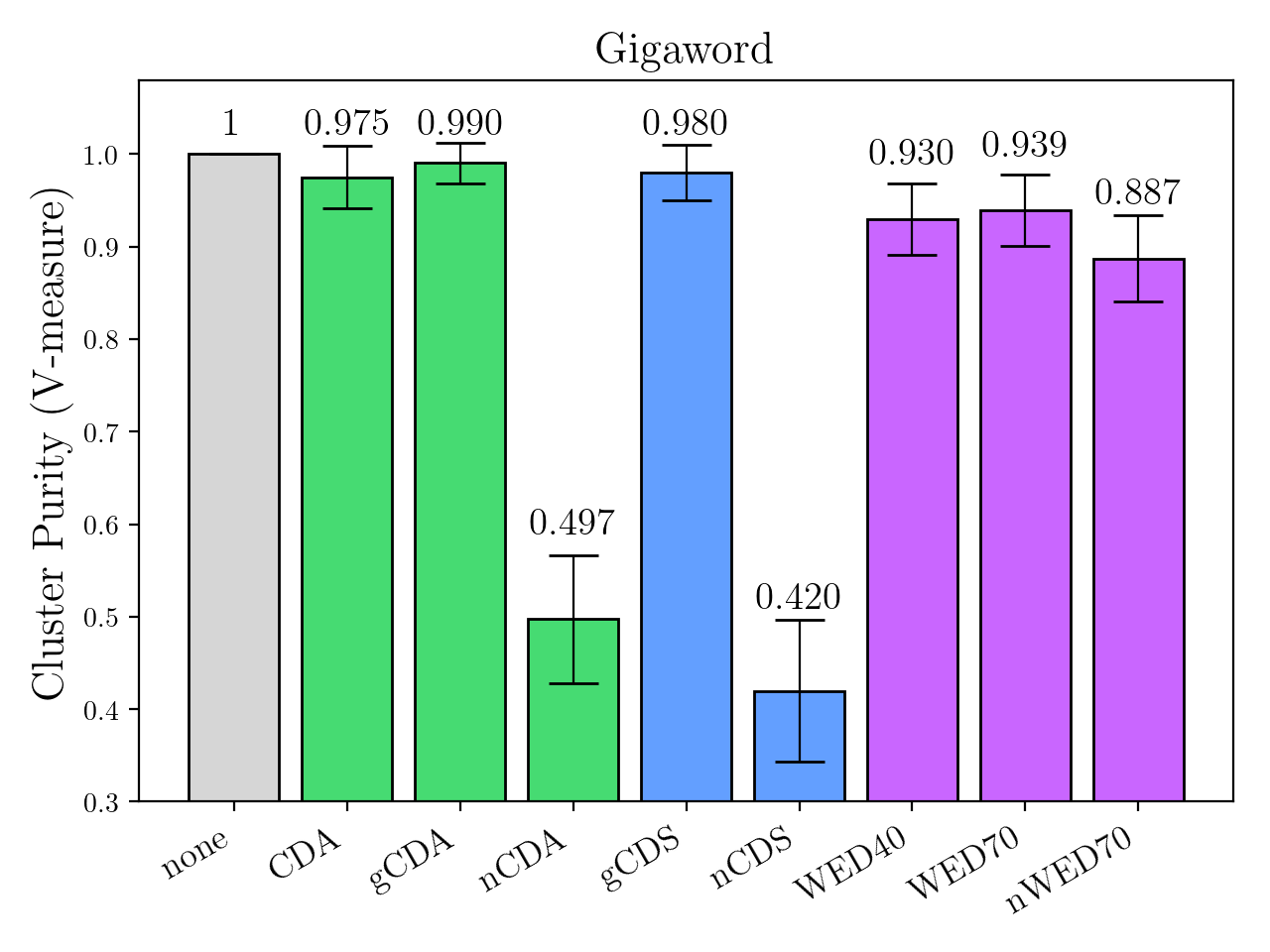}
\caption{Most biased cluster purity results}
\end{figure}

\begin{figure} [H]
\centering
\includegraphics[width=0.5\columnwidth,trim=0.2cm 0.5cm 0.3cm 0.3cm,clip]{./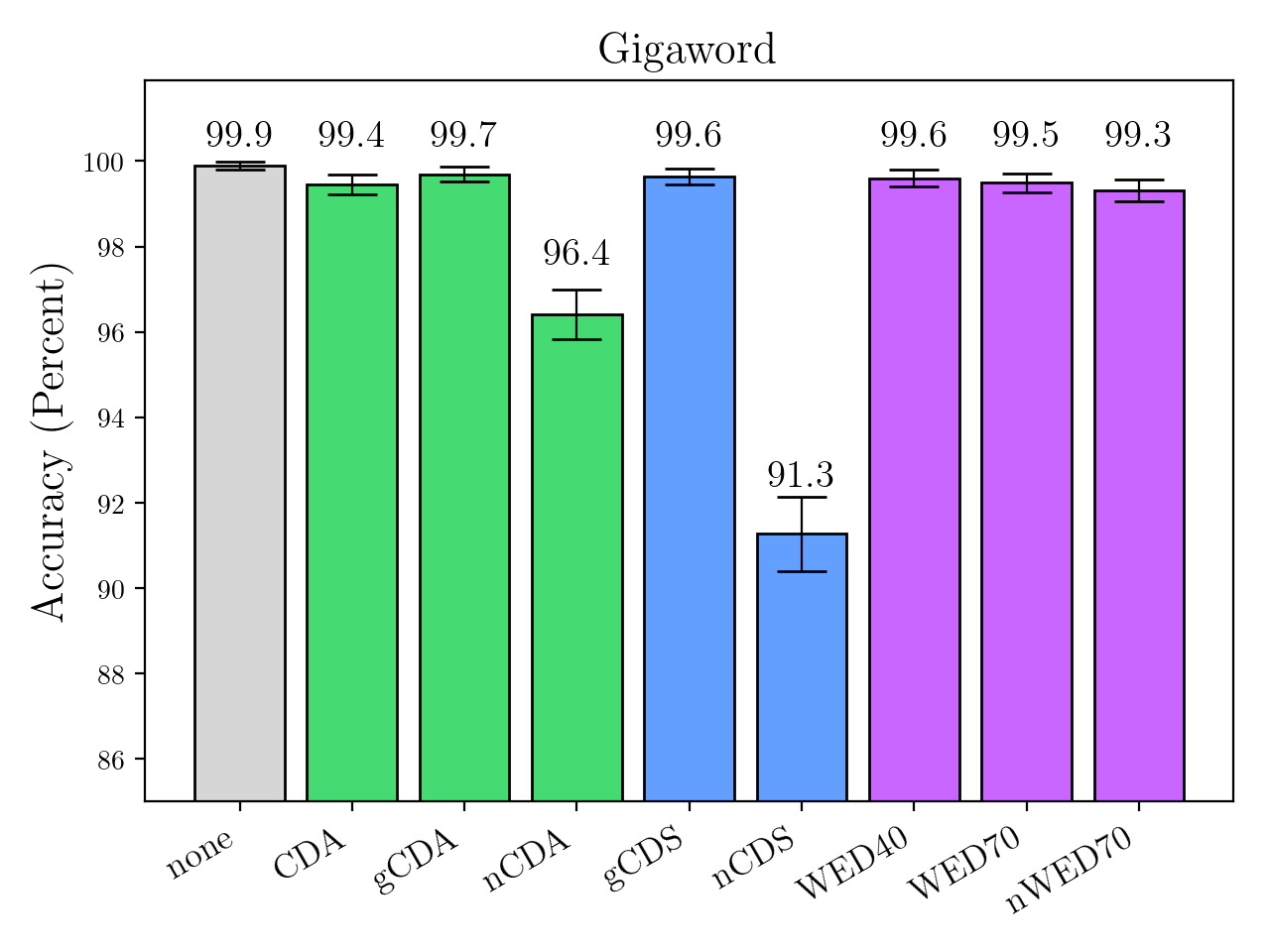}
\caption{Reclassification of most biased words results}
\end{figure}

\begin{figure} [H]
\centering
\includegraphics[width=0.5\columnwidth,trim=0.2cm 0.5cm 0.3cm 0.3cm,clip]{./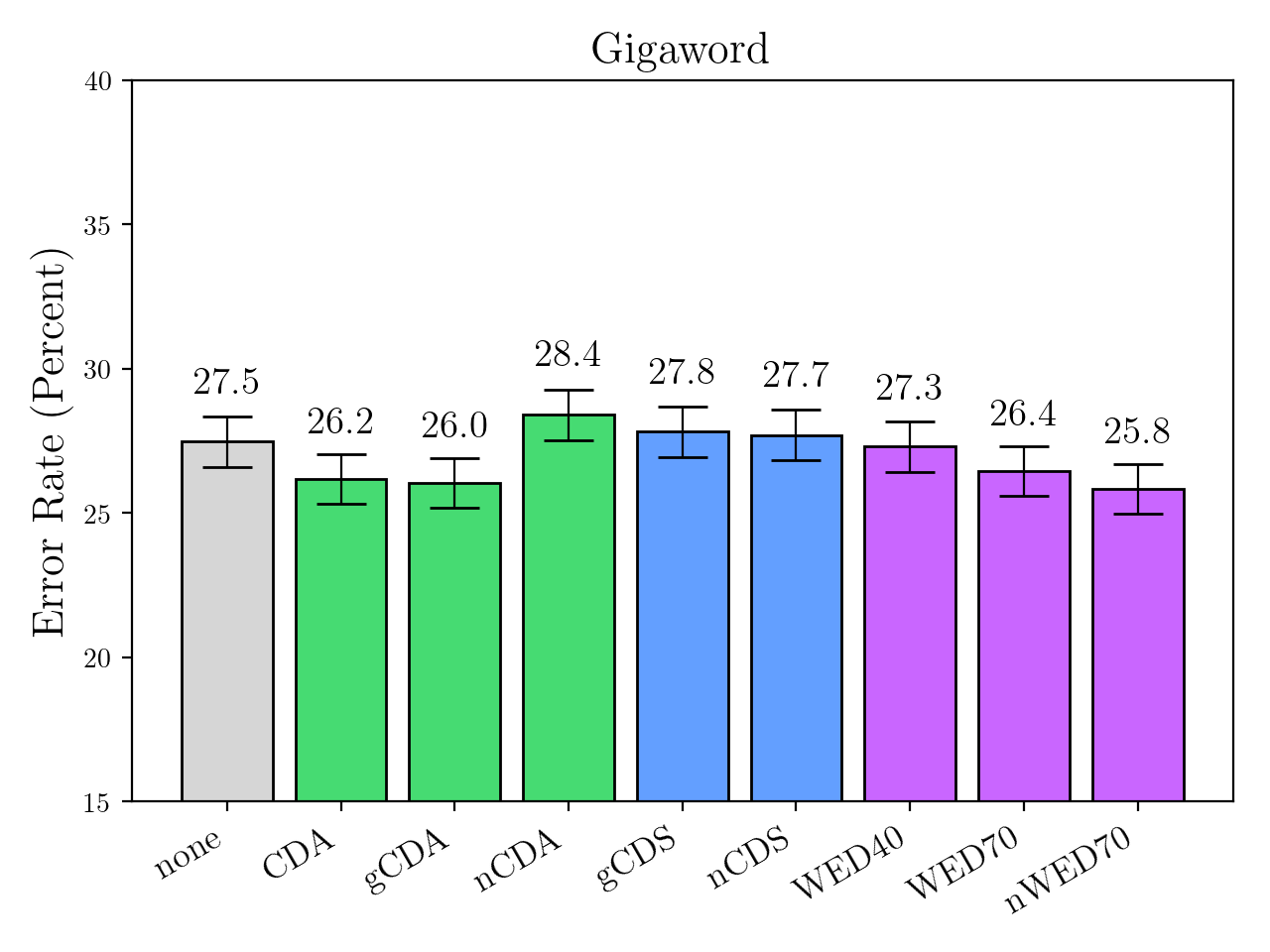}
\caption{Sentiment classification results}
\end{figure}

\begin{figure} [H]
\centering
\includegraphics[width=0.5\columnwidth,trim=0.2cm 0.5cm 0.3cm 0.3cm,clip]{./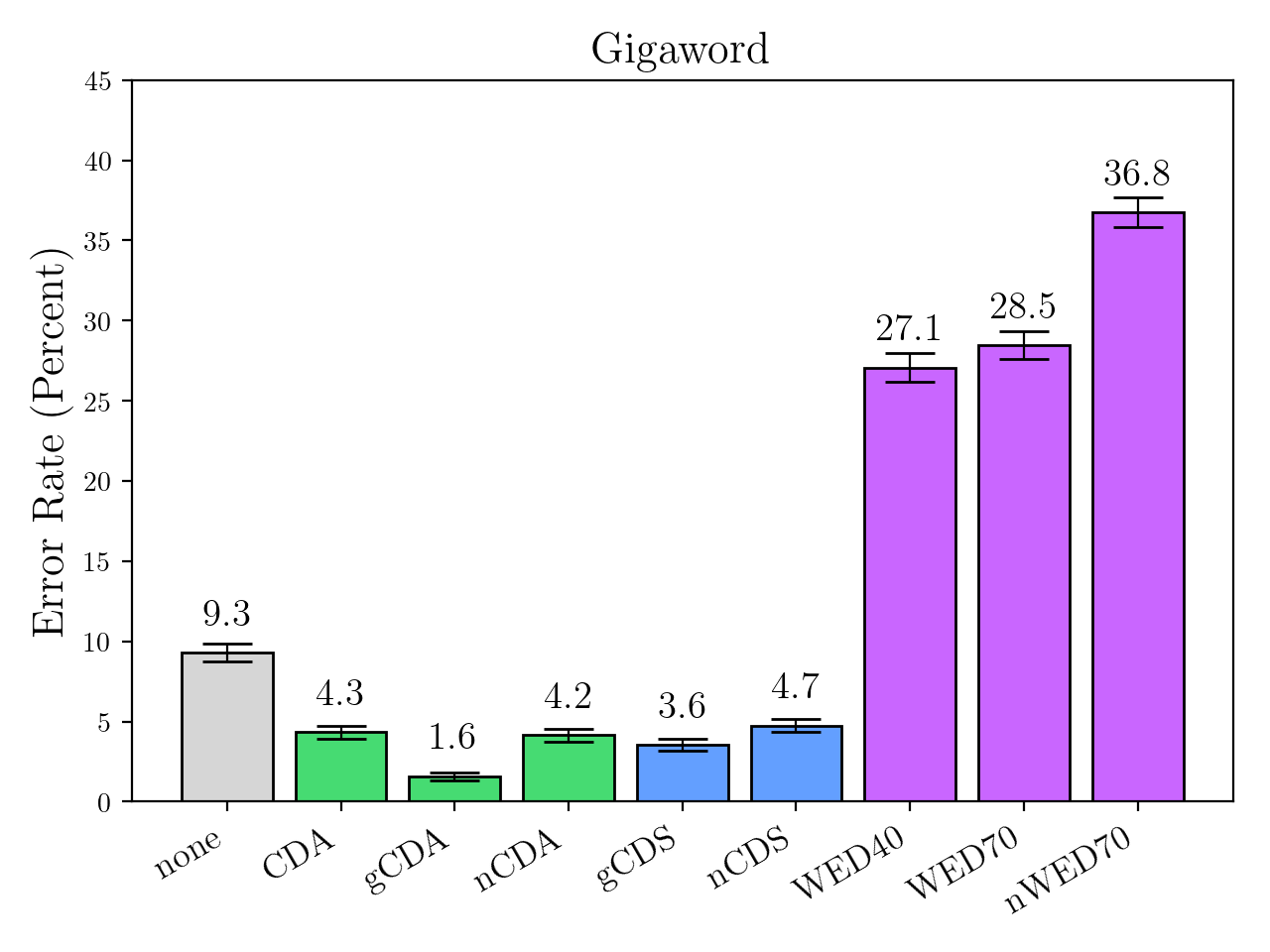}
    \caption{Non-biased gender analogy results}
\end{figure}

\end{document}